\documentclass[a4paper,10pt]{article}

\usepackage[T1]{fontenc}
\usepackage[utf8]{inputenc}
\usepackage{lmodern}

\usepackage[margin=1in]{geometry}

\usepackage{amsmath}
\usepackage{amssymb}
\usepackage{amsfonts}
\usepackage{amsthm}

\usepackage{graphicx}
\usepackage{here}

\usepackage{xcolor}
\usepackage{setspace}
\usepackage{stmaryrd}
\usepackage{cancel}
\usepackage{url}

\usepackage{hyperref}
\hypersetup{
  colorlinks=true,
  linkcolor=blue,
  citecolor=blue,
  urlcolor=blue
}

\newtheorem{theo}{Theorem}

\newtheorem{rem}{Remark}
\newtheorem{claim}{Claim}
\newtheorem{ex}{Example}

\usepackage{diagbox}

\begin{document}


\title{A Polynomial-Time Axiomatic Alternative to SHAP \\ for Feature Attribution\thanks{This work was supported by the Japan Society for the Promotion of Science [KAKENHI, grant nos. 25K05176] and Nomura Foundation. The views expressed in this paper are those of the authors and do not represent the views of the IMF, its Executive Board, or IMF management.}
} 
\author{
Kazuhiro Hiraki\thanks{
International Monetary Fund (khiraki@imf.org)}  
\hspace{0.5cm} 
Shinichi Ishihara\thanks{
Independent Researcher (ishihara5683@gmail.com)}
\hspace{0.5cm} 
Takumi Kongo\thanks{
Fukuoka University (kongo@adm.fukuoka-u.ac.jp)}
\hspace{0.5cm} 
Junnosuke Shino \thanks{
Waseda University (junnosuke.shino@waseda.jp)} 
}
\date{}
\maketitle
\thispagestyle{empty}

\begin{abstract}
In this paper, we provide a theoretically grounded and computationally efficient alternative to SHAP. To this end, we study feature attribution through the lens of cooperative game theory by formulating a class of XAI--TU games. Building on this formulation, we investigate equal-surplus-type and proportional-allocation-type attribution rules and propose a low-cost attribution rule, $\mathrm{ESENSC\_rev2}$, constructed by combining two polynomial-time closed-form rules while ensuring the null-player property in the XAI--TU domain.

Extensive experiments on tabular prediction tasks demonstrate that $\mathrm{ESENSC\_rev2}$ closely approximates exact SHAP while substantially improving scalability as the number of features increases. These empirical results indicate that equal-surplus-type attribution rules can achieve favorable trade-offs between computational cost and approximation accuracy in high-dimensional explainability settings.

To provide theoretical foundations for these findings, we establish an axiomatic characterization showing that $\mathrm{ESENSC\_rev2}$ is uniquely determined by efficiency, the null-player axiom, a restricted differential marginality principle, an intermediate inessential-game property, and axioms that reduce computational requirements. Our results suggest that axiomatically justified and computationally efficient attribution rules can serve as practical and theoretically principled substitutes for SHAP-based approximations in modern explainability pipelines.

\ \\ 
\ \\
Keywords: Explainable AI (XAI); Feature attribution; SHAP; Shapley value; Cooperative game theory; Axiomatization; Computational efficiency; Exact SHAP approximation
\ \\
JEL classification: C45, C71, C63 
\end{abstract}

\newpage
\setcounter{page}{1}
\pagenumbering{arabic}
\section{Introduction}\label{Intro}
Additive feature attribution (AFA) methods in explainable artificial intelligence (XAI) decompose a model’s prediction into feature-level contributions. Among these methods, SHAP, which is based on the Shapley value from cooperative game theory (Shapley \cite{shapley}), is the most widely used approach in practice (Lundberg and Lee \cite{lundberg2017}). More formally, let $f$ denote a trained prediction model, such as a neural network or XGBoost model. Consider the $k$-th observation $x_k = (x_{k1}, \ldots, x_{kn}) \in X$, consisting of $n$ features, and let the model prediction be $y_k = f(x_k)$. An AFA method provides a rule that allocates the difference between the prediction under full information, $f(x_k)$, and the baseline prediction $E[f(X)]$, that is, $f(x_k) - E[f(X)]$, to each feature according to its contribution. For example, suppose that the trained model $f$ predicts the probability of default based on a loan applicant's profile. If the $j$-th feature represents occupation and revealing the occupation of applicant $k$ substantially changes the predicted default probability, then feature $j$ should receive a larger contribution in explaining $y_k$. This illustrates the basic idea of AFA. When applying SHAP, one can construct a transferable utility (TU) cooperative game from the trained model $f$ and dataset $X$ by defining a coalitional value for each subset of features, corresponding to the expected prediction when only those features are known. In this paper, we refer to this game as a XAI–TU game. In a XAI-TU game, intriguing situations may arise—for example, positive and negative coalition values can coexist—phenomena that have not been fully explored in the traditional cooperative game theory literature. SHAP is obtained by applying the Shapley value to this XAI–TU game.

However, it is well known that the computational costs of the Shapley value and SHAP are substantial, making it infeasible to compute exact values in many applications. Approximation algorithms for SHAP have already been developed and implemented; however, their accuracy can be unstable and they typically require tuning parameters. Moreover, they do not necessarily satisfy the theoretical properties that SHAP is designed to fulfill.

This study investigates existing solution concepts from classical cooperative game theory that have low computational costs and desirable properties, and proposes additive feature attributions (AFAs) based on these solution concepts.

Specifically, we first focus on two existing cooperative game–theoretic solution concepts with relatively low computational cost: Equal Surplus (ES)–type solutions and Proportional Allocation (PA)–type solutions.

As for ES-type solutions, we first consider the solution defined as the equal-weight average (the so-called ``50--50 mixture'') of ES and the Egalitarian Nonseparable Contribution (ENSC), which were intensively studied in the 1990s in works such as Dragan et al.\ \cite{Dragan_Drissen_Funaki} and Driessen and Funaki \cite{Drissen_Funaki_1991}. However, the simple equal-weight average of ES and ENSC does not satisfy the null-player property, as is well known for ES-type solutions. In classical cooperative game theory, where players represent human agents, such a violation may occasionally be considered acceptable. For example, when the value generated by the grand coalition is sufficiently large relative to individual coalition values, one might accept allocating a small share of the surplus even to agents who make no marginal contribution (i.e., null players), reflecting a form of generosity in surplus division.

In contrast, such generosity should not be incorporated into the solution of XAI–TU games, where features play the role of players. From the perspective of model explanation, assigning a nonzero attribution to a feature that has no effect on the prediction is undesirable. Therefore, in this paper, we modify the equal-weight average of ES and ENSC so that the resulting solution satisfies the null-player property, and we formulate the modified rule as an additive feature attribution (AFA) method.

As for PA-type solutions, we introduce new solution concepts that preserve the ordering of players even in games where positive and negative coalition values coexist. As noted above, in XAI–TU games it is common for positive and negative coalition values to appear within a single game. In such situations, a product between the total surplus to be distributed and the sum of player weights can be negative. When this product is then multiplied by a given player’s weight (which in classical TU games corresponds to $v(i)$), proportional allocation rules can lead to situations in which, for example, $v(i) > v(j)$ but player $i$ receives a smaller allocation than player $j$. As will be discussed in subsequent sections, we refer to this phenomenon as the \emph{order-reversal problem}. An allocation rule is said to satisfy the order preservation property if the resulting allocation maintains the ranking implied by $v(i)$ even in the presence of such sign conflicts. In this paper, we develop proportional-allocation–type AFAs that satisfy this order-preservation property.

Next, we apply the ES-type and PA-type solution concepts, as well as SHAP and its existing approximation algorithms, to actual datasets and conduct a comparative performance analysis. The ES-type AFA introduced above is not only substantially less computationally demanding than Exact SHAP but also requires less computation than existing SHAP approximation methods, such as Permutation SHAP and Kernel SHAP. Moreover, the computational advantage of the ES-type AFA becomes increasingly pronounced as the number of features grows, with the gap expanding exponentially. In terms of approximation accuracy, the ES-type AFA exhibits only small deviations from SHAP and achieves at least comparable approximation quality to existing SHAP approximation methods. In contrast, although PA-type solution concepts are computationally efficient, their deviations from SHAP can be large depending on the model and the number of features. These results strongly suggest that PA-type AFAs suffer from intrinsic issues specific to this class of solution concepts.

Finally, we provide an axiomatic characterization of the ES-type AFA that demonstrates superior performance in terms of computational cost and approximation accuracy relative to SHAP. The Shapley value, on which SHAP is based, is well known to satisfy several desirable properties in cooperative game theory. These include efficiency, the null-player property, and differential marginality. The latter axiom concerns the correspondence between differences in marginal contributions and differences in assigned payoffs across players. Specifically, for any two games $v$ and $w$ and any two players $i$ and $j$, if the difference between the marginal contributions of $i$ and $j$ in game $v$ is equal to the corresponding difference in game $w$, then the difference between the solution values assigned to $i$ and $j$ in $v$ should be equal to that in $w$. It is known that the Shapley value is the unique solution satisfying these axioms (Casajus \cite{Casajus11}). In contrast, we show that the ES-type AFA formulated in this paper is the unique solution satisfying efficiency, the null-player property, a weakened version of differential marginality, an axiom that reduces the number of coalitions required for computation, and an axiom that specifies the solution in a simple benchmark environment. This characterization clarifies the theoretical foundations of the proposed rule and distinguishes it from existing solution concepts.

This paper makes three main contributions. 
First, we formalize additive feature attribution problems as XAI–TU games, highlighting structural properties that distinguish them from traditional cooperative games. 
Second, we develop computationally efficient AFAs based on ES-type and PA-type solution concepts and provide a systematic empirical comparison with SHAP and its approximation algorithms. 
Third, we provide an axiomatic characterization of the ES-type AFA, showing that it is uniquely determined by a set of economically and computationally meaningful axioms. 
To the best of our knowledge, this is the first study that provides an axiomatic characterization of a polynomial-time feature attribution rule that approximates SHAP.

The ES-type solution proposed in this paper combines low computational complexity with efficiency and the null-player property—both fundamental requirements for any AFA method—and is supported by a rigorous axiomatic foundation. Taken together, these results establish it as a theoretically grounded and practically viable alternative to SHAP.

The remainder of this paper is organized as follows. In Section \ref{subsec_junbi}, we formalize the setting of interest as an “XAI–TU game” and highlight its distinctive structural features. Section \ref{main_results} develops AFA methods based on cooperative-game–theoretic solution concepts distinct from the Shapley value, with particular emphasis on computational efficiency. Section \ref{experiment} provides a comparative evaluation of the proposed AFAs and existing SHAP-based methods. Section \ref{axiomatization} presents the axiomatic characterization of the ES-type AFA. Section \ref{conclusion} concludes the paper.

\section{Explainable-AI TU Games}\label{subsec_junbi}

\subsection{Formulation of XAI TU Games}

Let $t$ denote the number of observations and $n$ the number of features. Define the index sets $N=\{1,\ldots,n\}$ and $T=\{1,\ldots,t\}$. Let $X=(X_1,\ldots,X_j,\ldots,X_n)$ be the $t\times n$ feature matrix. For each feature $j\in N$, let $X_j=(x_{1,j},\ldots,x_{t,j})'$ denote the vector of its realizations across observations. For each observation $\tau\in T$, let $x_\tau=(x_{\tau,1},\ldots,x_{\tau,j},\ldots,x_{\tau,n})$ denote the feature vector of the $\tau$-th observation. Let $f$ be a trained prediction model, and let $Y=(y_1,\ldots,y_t)'$ denote the vector of predicted values given by $f$, i.e., $Y=f(X)$.

For any coalition $S\in 2^N$, define $x_{\tau,S}=\{x_{\tau,j}\mid j\in S\}$, that is, the subvector of the $\tau$-th observation consisting of the features in $S$. Similarly, define $X_S=\{X_j\mid j\in S\}$. Let $s=|S|$ denote the cardinality of $S$. Whenever no confusion arises, a singleton coalition $\{i\}$ is simply written as $i$.

In cooperative game theory, a transferable-utility (TU) game is represented by a pair $(N,v)$, where $N=\{1,\ldots,n\}$ is the set of players and $v$ is a real-valued function defined on $2^N$. A central problem in cooperative game theory is how to allocate the total worth $v(N)$ among players, given the worth $v(S)$ that each coalition $S\subseteq N$ can generate on its own.

We now construct, for each observation $\tau$, a TU game in which the set of features $N$ is interpreted as the set of players. For the $\tau$-th observation, define a characteristic function $v_\tau:2^N\to\mathbb{R}$ by $$v_\tau(S)=\mathbb{E}\left[f(x_{\tau,S},\,X_{N\setminus S})\right],$$ where the expectation is taken with respect to a reference distribution of the features in $N\setminus S$. The TU game defined by this expression is referred to as an \emph{Explainable-AI TU game}, or simply an \emph{XAI TU game}.

Throughout this paper, we adopt the interventional formulation of the characteristic function above. That is, when evaluating $v_{\tau}(S)$, the feature values in $N\setminus S$ are drawn independently from their empirical distribution and combined with the observed values $x_{\tau,S}$. This corresponds to the interventional version of SHAP commonly used in the explainable AI literature. We do not consider conditional expectations of the form $\mathbb{E}[f(X)\mid X_S=x_{\tau,S}]$.

\subsection{Remarks on XAI TU Games}
In this subsection, we highlight several salient characteristics of XAI TU games that are important for the analyses conducted in the subsequent sections.

In the definition above, $v_{\tau}(S)$ represents the predicted output of the trained model $f$ for the $\tau$-th observation when the feature values $x_{\tau,j}$ for $j\in S$ are treated as known, while the feature values $x_{\tau,k}$ for $k\in N\setminus S$ are treated as unknown. In the empirical implementation, the expectation is computed as $$\mathbb{E}[f(x_{\tau,S},X_{N\setminus S})]=\frac{1}{t}\sum_{\tau'\in T}f(x_{\tau,S},x_{\tau',N\setminus S}),$$ (see also Example \ref{simpleexvtau} below). That is, when computing $v_{\tau}(S)$, the feature values not contained in $S$ are assumed to follow the empirical distribution of the dataset, where each observed combination occurs with equal probability, while the feature values in $S$ are treated as fixed.

\begin{ex}\label{simpleexvtau}
Consider the case with $t=4$ observations and $n=3$ features:
$$
X=
\begin{pmatrix}
x_{11}&x_{12}&x_{13}\\
x_{21}&x_{22}&x_{23}\\
x_{31}&x_{32}&x_{33}\\
x_{41}&x_{42}&x_{43}
\end{pmatrix}.
$$
For observation $\tau=4$, the characteristic function $v_4$ is given by
$$v_4(\emptyset)=\frac{1}{4}\sum_{i=1}^4 f(x_i),\quad x_i=(x_{i1},x_{i2},x_{i3}),$$
$$v_4(1)=\frac{1}{4}\sum_{i=1}^4 f(x_{41},x_{i2},x_{i3}),$$
$$v_4(12)=\frac{1}{4}\sum_{i=1}^4 f(x_{41},x_{42},x_{i3}),$$
$$v_4(123)=f(x_{41},x_{42},x_{43}).$$
\end{ex}

As illustrated above, $v_{\tau}(\emptyset)=\mathbb{E}[f(X)]$ represents the baseline prediction when all features are treated as unknown. Unlike standard cooperative game formulations where $v(\emptyset)=0$ is often assumed without loss of generality, in XAI TU games $v_{\tau}(\emptyset)$ is generally nonzero and must be treated explicitly.

Another notable feature is that the values $v_{\tau}(S)$ may differ in sign across coalitions. The total surplus to be allocated, namely the improvement of the prediction from its baseline level, $$v_{\tau}(N)-v_{\tau}(\emptyset),$$ may be either positive or negative. Similarly, marginal contributions such as $v_{\tau}(i)-v_{\tau}(\emptyset)$ may differ in sign across features (players). It is even possible for $$\sum_{i\in N}\bigl(v_{\tau}(i)-v_{\tau}(\emptyset)\bigr)$$ to have a different sign from $v_{\tau}(N)-v_{\tau}(\emptyset)$. Such coexistence of positive and negative contributions is not typical in traditional cooperative game theory but arises naturally in XAI TU games. When standard solution concepts, typically proportional allocation rules, are applied directly, this may lead to unintuitive attribution patterns. These issues will be examined in the next section.

Finally, in many XAI applications the number of features $n$ is large. Since the exact computation of the Shapley value grows exponentially in $n$, computing exact SHAP values becomes computationally infeasible in most practical settings. This motivates the search for computationally efficient alternatives.

These characteristics distinguish XAI TU games from standard cooperative games and motivate the development of new solution concepts tailored to this setting.

\section{Solution Concepts for XAI TU Games}\label{main_results}
\subsection{Additive Feature Attribution (AFA)}\label{def_of_afa}
In an XAI–TU game, a solution concept for a given observation $\tau$ allocates the difference between the predicted value when all features are known and the predicted value when all features are unknown, namely $v_{\tau}(N)-v_{\tau}(\emptyset)$, to each feature according to its contribution. We refer to any such allocation rule as an additive feature attribution (AFA).

To illustrate the idea, consider a setting in which the target variable $y$ represents stock returns and the features consist of industrial production, exchange rates, and inflation. For a particular observation (e.g., $\tau=2024$), suppose that industrial production and exchange rates are the main drivers of stock returns, while inflation has only a minor effect. 

In decomposing $v_{\tau}(N)-v_{\tau}(\emptyset)$ into feature-level contributions, the basic principle of AFA is to assign larger values to industrial production and exchange rates and a smaller value to inflation.\footnote{For expositional convenience, we describe a time-series example here; however, AFA methods are applicable to a much broader class of datasets and prediction problems.} If the prediction model $f$ is a simple linear model, such as linear regression, one can decompose the prediction using the estimated regression coefficients. When $f$ is a complex machine learning model, however, such parameter-based decompositions are generally infeasible, and specialized attribution methods are required to quantify and visualize feature contributions. AFA provides one such general framework.

In what follows, we suppress the observation index $\tau$ and write the TU game associated with observation $\tau$ simply as $(N,v)$, unless the dependence on $\tau$ needs to be made explicit. For an XAI–TU game $(N,v)$ and a feature (player) $j\in N$, consider a real-valued function $\psi:N\to\mathbb{R}$, where $\psi(j)$ denotes the contribution assigned to feature $j$. Following standard notation in cooperative game theory, we write $\psi_j$ in place of $\psi(j)$ and let $\psi=(\psi_1,\ldots,\psi_n)$.

We say that $\psi$ is an additive feature attribution (AFA) if it satisfies
\begin{equation}\label{AFAdef}
\sum_{j\in N}\psi_j = v(N)-v(\emptyset).
\end{equation}
That is, an AFA distributes the difference between the prediction when all features are known and the baseline prediction when all features are unknown across features without surplus or deficit.

Whenever $\psi$ satisfies \eqref{AFAdef}, we refer to it as an AFA and may denote it by $\psi^{\mathrm{AFA}}$ when it is useful to emphasize this interpretation. In the remainder of this section, we compare various AFAs derived from cooperative-game–theoretic solution concepts.

\subsection{SHAP}\label{shap}
SHAP is defined by
\begin{equation}
\psi_{j}^{\mathrm{SHAP}}
=
\sum_{S\subseteq N\setminus j}
\frac{s!(n-s-1)!}{n!}
\left(v(S\cup j)-v(S)\right).
\label{SHAP_saisho}
\end{equation}
This expression was formalized by Lundberg and Lee \cite{lundberg2017}, and $\psi_{j}^{\mathrm{SHAP}}$ coincides with the Shapley value (Shapley \cite{shapley}) of the corresponding XAI–TU game.

A key advantage of SHAP is its strong theoretical foundation. Since SHAP is exactly the Shapley value applied to an XAI–TU game, the extensive axiomatic justifications established for the Shapley value carry over directly. In the XAI context, SHAP is also model-agnostic, meaning that it can be applied to any prediction model. Moreover, it provides a local explanation for each observation through an exact decomposition of the prediction into feature contributions. By aggregating SHAP values across observations, one can also obtain global measures of feature importance for the model as a whole.\footnote{All AFA methods considered in this paper share these properties: they are model-agnostic and can be interpreted both locally and globally.}

The primary drawback of SHAP is its computational cost. When the number of features is $n$, the number of coalitions $S\subseteq N$ is $2^n$, and evaluating \eqref{SHAP_saisho} requires computing $v(S)$ for all such subsets. Consequently, the computational complexity of exact SHAP grows exponentially in $n$, rendering exact computation infeasible in many practical applications.

In the existing literature of the cooperative game theory, two solution concepts that are well known for their low computational cost are equal-surplus–type and proportional-allocation–type solutions. In the following subsections, we formulate AFAs based on these approaches and examine their properties in the context of XAI–TU games.

\subsection{AFAs Based on Equal Surplus and Related Solutions}\label{es-related}
The equal-surplus solution (ES) is defined by
\begin{equation}
\psi_{j}^{\mathrm{ES}}
=
v(j)-v(\emptyset)
+
\frac{
v(N)-v(\emptyset)
-
\sum_{i\in N}\left\{v(i)-v(\emptyset)\right\}
}{n}.
\label{ES}
\end{equation}

The allocation rule based on ES consists of two steps. In the first step, each player $j$ receives its marginal contribution relative to the empty coalition, namely $v(j)-v(\emptyset)$. In the second step, the residual surplus—defined as the difference between the total surplus $v(N)-v(\emptyset)$ and the sum of first-step allocations—is distributed equally among all players and added to their first-step payoffs.

As an AFA, the equal-surplus rule $\psi_{\tau,j}^{\mathrm{ES}}$ has been examined in Condevaux et al.\ \cite{Condevaux2023} and Hiraki, Ishihara and Shino \cite{HIS2024}. In what follows, we investigate several AFAs that are conceptually related to the equal-surplus approach.

The first related AFA may be viewed as a “reverse” equal-surplus rule. In cooperative game theory and social choice theory, this solution is known as the Egalitarian Nonseparable Contribution (ENSC) or Equal Allocation of Nonseparable Costs; it was formalized and axiomatized in the 1980s (see Dragan et al.\ \cite{Dragan_Drissen_Funaki}, Driessen and Funaki \cite{Drissen_Funaki_1991}, and Moulin \cite{Moulin_1981, Moulin_1985}).  

While ES starts from the situation in which all features are unknown and assigns to each feature its marginal contribution when it becomes known individually, ENSC takes the opposite perspective. In the first step of ENSC, one starts from the grand coalition $N$ (where all features are known) and considers the reduction in value when feature $j$ is removed, namely $v(N)-v(N\setminus j)$, assigning this quantity to feature $j$. In the second step, as in ES, the remaining surplus is distributed equally among all players.

Formally, the ENSC-based AFA is defined as
\begin{equation}
\psi_{j}^{\mathrm{ENSC}}
=
v(N)-v(N\setminus j)
+
\frac{
v(N)-v(\emptyset)
-
\sum_{i\in N}
\left\{
v(N)-v(N\setminus i)
\right\}
}{n}.
\label{AFA-ENSC}
\end{equation}

A further AFA is obtained by combining ES and ENSC symmetrically:
\begin{equation}
\psi_{j}^{\mathrm{ESENSC}}
=
\frac{1}{2}
\left(
\psi_{j}^{\mathrm{ES}}
+
\psi_{j}^{\mathrm{ENSC}}
\right).
\label{ESENSC}
\end{equation}

A major advantage of these equal-surplus–type solutions is their low computational cost. Whereas SHAP requires evaluating $v(S)$ for all $2^n$ coalitions, ES and ENSC each require only $n+2$ evaluations of the characteristic function. The mixed rule ES–ENSC requires $2n+2$ evaluations, which remains dramatically smaller than $2^n$. As will be shown in the experimental section, the differences in actual computation times among ES, ENSC, and ES–ENSC are negligibly small. Accordingly, in the subsequent analysis we focus primarily on ES-ENSC as a representative equal-surplus–type AFA.

While the mixed rule ES–ENSC enjoys very low computational cost, it has a notable drawback: it typically violates the null-player property.

\begin{itemize}
\item \textbf{Null-player property.}  
If $j\in N$ satisfies $v(S\cup j)=v(S)$ for all $S\subseteq N\setminus j$, then $\psi_j(v)=0$.\footnote{In what follows, we occasionally write $\psi_j$ as a function of its associated (XAI-)TU game $v$ when necessary.}
\end{itemize}

The violation of the null-player property stems from the unconditional equal redistribution of the residual surplus in the second step, which assigns positive payoffs even to players with zero marginal contributions. To address this issue, we consider modified versions of ES-ENSC that restrict the redistribution of the residual to features with nonzero marginal effects.

\paragraph{Modified version I.}
We first consider a modification in which residuals are redistributed only among features with nonzero first-step allocations.  
The procedure is as follows.

\begin{itemize}
\item Step 1-1: Assign $v(j)-v(\emptyset)$ to each player $j$.
\item Step 1-2: Redistribute the remaining surplus equally among players satisfying $v(j)-v(\emptyset)\neq 0$.
\item Step 1-3: Assign $v(N)-v(N\setminus j)$ to each player $j$.
\item Step 1-4: Redistribute the remaining surplus equally among players satisfying $v(N)-v(N\setminus j)\neq 0$.
\item Step 1-5: Take the average of the outcomes from Steps 1-2 and 1-4.
\end{itemize}

This rule is well defined provided that  
(i) there exists at least one $j$ such that $v(j)\neq v(\emptyset)$, and  
(ii) there exists at least one $j$ such that $v(N)\neq v(N\setminus j)$.  
If either condition fails, the solution cannot be defined.

Formally, define modified versions of ES and ENSC as
\begin{equation}
\psi^{\mathrm{ES\mbox{-}rev1}}_{j}=
\begin{cases}
0 & \text{if } v(j)=v(\emptyset),\\[4pt]
v(j)-v(\emptyset)
+
\dfrac{
v(N)-v(\emptyset)-\sum_{i\in N}\{v(i)-v(\emptyset)\}
}{
|\{k\in N| v(k)\neq v(\emptyset)\}|
}
& \text{otherwise,}
\end{cases}
\label{ES-rev1}
\end{equation}

\begin{equation}
\psi^{\mathrm{ENSC\mbox{-}rev1}}_{j}=
\begin{cases}
0 & \text{if } v(N)=v(N\setminus j),\\[4pt]
v(N)-v(N\setminus j)
+
\dfrac{
v(N)-v(\emptyset)-\sum_{i\in N}\{v(N)-v(N\setminus i)\}
}{
|\{k\in N| v(N)\neq v(N\setminus k)\}|
}
& \text{otherwise.}
\end{cases}
\end{equation}

The first modified ES–ENSC rule is then defined by
\begin{equation}
\psi_{j}^{\mathrm{ESENSC\mbox{-}rev1}}
=
\frac{1}{2}
\left(
\psi_{j}^{\mathrm{ES\mbox{-}rev1}}
+
\psi_{j}^{\mathrm{ENSC\mbox{-}rev1}}
\right).
\label{ESENSC-r1}
\end{equation}

\paragraph{Modified version II.}
We further consider a second modification that weakens the conditions under which the rule is defined.  
The procedure is:

\begin{itemize}
\item Step 2-1: Assign $v(j)-v(\emptyset)$ to each player $j$.
\item Step 2-2: Assign $v(N)-v(N\setminus j)$ to each player $j$.
\item Step 2-3: Average the two quantities.
\item Step 2-4: Redistribute the remaining surplus equally among the players other than those whose values are zero in both Step 2-1 and Step 2-2.
\end{itemize}


Formally, the second modified rule is given by
\begin{equation}\label{ESENSC-r2}
\psi^{ESENSC-rev2}_{j} =
\begin{cases}
0
& \text{if } v(j)=v(\emptyset) \text{ and } v(N)=v(N \setminus j), \\[6pt]

\begin{aligned}
&\frac{v(j)-v(\emptyset)+v(N)-v(N \setminus j)}{2} \\
&\quad + 
\frac{v(N)-v(\emptyset)
-\sum_{i \in N}
\left[
\frac{v(i)-v(\emptyset)+v(N)-v(N \setminus i)}{2}
\right]}
{|\{k \in N \mid v(k) \neq v(\emptyset)
\text{ or } v(N)-v(N \setminus k)\}|}
\end{aligned}

& \text{otherwise.}
\end{cases}
\end{equation}

Both modified rules satisfy the null-player property. In fact, they satisfy a stronger requirement: certain players may receive zero allocation even if they are not null players in the classical sense. This condition is not imposed for normative reasons. Rather, under computational constraints, it allows features that may behave as null players (though they are not necessarily true null players) to be assigned zero attribution. 

In empirical applications, it is rare for the conditions required by the first modification to fail. Hence, the two modified rules do not differ in any essential way in typical empirical applications. However, the second version is defined on a strictly larger class of games. For this reason, in the remainder of the paper we treat $\psi^{\mathrm{ESENSC\mbox{-}rev2}}$ as the representative equal-surplus–type AFA.

\subsection{AFAs Based on Proportional Solutions and Related Solutions}\label{prop-related}
We next consider AFAs based on proportional-allocation–type solution concepts. The proportional allocation rule (PA) is defined by
\begin{equation}
\psi^{\mathrm{PA}}_{j}
= \frac{v(j)-v(\emptyset)}
{\sum_{i\in N}\left\{v(i)-v(\emptyset)\right\}}
\cdot
\left\{v(N)-v(\emptyset)\right\}.\footnote{For applications of the proportional allocation rule, see 
Moriarity \cite{Moriarity_1975} and Moulin \cite{Moulin_1987}. For an axiomatization of the proportional allocation rule, see Zou et al. \cite{Zou_2021}.}
\label{PA}
\end{equation}

The proportional allocation rule satisfies the null-player property. However, applying PA to XAI–TU games raises a fundamental difficulty. As noted at the end of Section \ref{subsec_junbi}, in XAI–TU games it is possible for  
$\sum_{i\in N}\bigl(v(i)-v(\emptyset)\bigr)$ and $v(N)-v(\emptyset)$  
to have opposite signs.  
In such cases, it may occur that for two features $i$ and $j$,
\[
v(i)-v(\emptyset) < v(j)-v(\emptyset)
\quad\text{but}\quad
\psi^{\mathrm{PA}}_{i}>\psi^{\mathrm{PA}}_{j}.
\]
That is, the ordering of marginal contributions can be reversed in the resulting allocation by PA rule.  We refer to this issue as the order-reversal problem. The following simple example illustrates why this problem is undesirable.

\begin{ex}\label{ex0}
Consider the two-player game given by  
$v(\emptyset)=0$, $v(1)=-3$, $v(2)=2$, and $v(\{1,2\})=1$.  
Then
\[
\psi^{\mathrm{PA}}_{1}=3,
\qquad
\psi^{\mathrm{PA}}_{2}=-2.
\]
Feature 1, taken alone, moves the prediction in a negative direction, whereas feature 2 moves it in a positive direction. Nevertheless, the proportional allocation rule assigns a large positive contribution to feature 1 and a large negative contribution to feature 2, which is clearly counterintuitive.
\end{ex}

If an allocation rule does not induce the order-reversal problem, it is said to satisfy the order preservation property. The problem we address here is therefore to identify proportional-allocation–type AFAs that satisfy the order preservation property.

As in Subsection \ref{es-related}, one may also consider a “reverse” proportional rule defined by
\begin{equation}
\psi^{\mathrm{ROP}}_{j}
=
\frac{v(N)-v(N\setminus j)}
{\sum_{i\in N}\left\{v(N)-v(N\setminus i)\right\}}\cdot\left\{v(N)-v(\emptyset)\right\}.\footnote{For studies that investigate the ROP solution concept, see Straffin and Heaney \cite{straffin_1981}, Li et al.\ \cite{li_2020}, van den Brink et al.\ \cite{Brink_2023}, and Kongo \cite{kongo_2026}.}
\label{ROP}
\end{equation}
and the symmetric mixture
\begin{equation}
\psi^{\mathrm{PAROP}}_{j}
=
\frac{1}{2}
\left(
\psi^{\mathrm{PA}}_{j}
+
\psi^{\mathrm{ROP}}_{j}
\right).
\label{PAROP}
\end{equation}

However, neither $\psi^{\mathrm{PA}}$, $\psi^{\mathrm{ROP}}$, nor $\psi^{\mathrm{PAROP}}$ satisfies the order preservation property in general. This motivates the search for proportional-allocation–type AFAs that maintain the ordering of marginal contributions even in the presence of sign conflicts.

\paragraph{Introducing a complementary rule to PA: RPA.}
To address the order-reversal problem, we introduce a complementary proportional rule, denoted RPA, which reverses the weighting structure of PA.
PA uses the marginal contribution $v(j)-v(\emptyset)$ as the weight assigned to feature $j$, and allocates the total surplus proportionally to these weights.

In contrast, we define the reverse proportional allocation rule (RPA), in which the weight assigned to feature $j$ is the residual value that remains when the marginal contributions of all other features are deducted from the total surplus, namely
\[
v(N)-v(\emptyset)-\sum_{i\neq j}\left\{v(i)-v(\emptyset)\right\}.
\]
Formally,
\begin{eqnarray}
\psi^{\mathrm{RPA}}_{j}
&=&
\frac{
v(N)-v(\emptyset)-\sum_{i\neq j}\left\{v(i)-v(\emptyset)\right\}
}{
\sum_{i\in N}
\left\{
v(N)-v(\emptyset)-\sum_{k\neq i}\left\{v(k)-v(\emptyset)\right\}
\right\}
}
\cdot
\left\{v(N)-v(\emptyset)\right\}
\\
&=&
\frac{
v(N)-v(\emptyset)-\sum_{i\neq j}\left\{v(i)-v(\emptyset)\right\}
}{
n\left\{v(N)-v(\emptyset)\right\}
-
(n-1)\sum_{i\in N}\left\{v(i)-v(\emptyset)\right\}
}
\cdot
\left\{v(N)-v(\emptyset)\right\}.
\label{RPAarrange}
\end{eqnarray}

In the two-player game of Example \ref{ex0}, one obtains
\[
\psi^{\mathrm{RPA}}_{1}=-\frac{1}{3},
\qquad
\psi^{\mathrm{RPA}}_{2}=\frac{4}{3}.
\]
Thus, 
PA satisfies the order preservation property in this example.  
This suggests that when the signs of $v(N)-v(\emptyset)$ and $\sum_{i\in N}\{v(i)-v(\emptyset)\}$ differ, RPA may be preferable to PA.

We now generalize this argument. Assume $v(N)-v(\emptyset)\neq 0$, since otherwise attribution is trivial. From \eqref{PA} and \eqref{RPAarrange}, the order-reversal problem arises in PA whenever$(v(N)-v(\emptyset))/\left[\sum_{i\in N}\left\{v(i)-v(\emptyset)\right\}\right]<0,$
and arises in RPA whenever
$(v(N)-v(\emptyset))/\left[n\left\{v(N)-v(\emptyset)\right\}-(n-1)\sum_{i\in N}\left\{v(i)-v(\emptyset)\right\}\right]<0.$
In both cases, whether order reversal occurs depends solely on the sign combination of $v(N)-v(\emptyset)$ and $\sum_{i\in N}\{v(i)-v(\emptyset)\}$.

\begin{table}[htbp]
\centering
\begin{tabular}{c|ccc}
\diagbox[width=18em]{$v(N)-v(\emptyset)$}{$\sum_{i\in N}\{v(i)-v(\emptyset)\}$}
 & Positive & Negative & Zero \\
\hline
Positive & (OK, ?) & (X, OK)  & (NA, OK)  \\
Negative & (X, OK) & (OK, ?) & (NA, OK)
\end{tabular}
\caption{PA and RPA: cases where the order-reversal problem does not arise (“OK”)}
\label{hikakutable}
\end{table}

Table \ref{hikakutable} summarizes the possible cases. In the table, the first element in each cell refers to PA and the second to RPA. “OK” indicates that the order-reversal problem does not arise, “X” indicates that it necessarily arises, “?” indicates that the problem can arise depending on signs, and “NA” indicates that the rule is undefined. 
Table \ref{hikakutable} shows that for every possible sign configuration, at least one of PA or RPA avoids the order-reversal problem. Therefore, by selecting PA or RPA appropriately according to the sign configuration, the order-reversal problem can be completely avoided.

A simple selection rule is to choose the rule that corresponds to the “OK” case:
\begin{equation}
\psi^{\mathrm{PARPA}}_{j} =
\begin{cases}
\psi^{\mathrm{PA}}_{j}
&
\text{if }
\big[v(N)-v(\emptyset)\big]
\cdot
\big[\sum_{i\in N}\{v(i)-v(\emptyset)\}\big]
>0,
\\[6pt]
\psi^{\mathrm{RPA}}_{j}
&
\text{otherwise.}
\end{cases}
\label{PARAP_form}
\end{equation}

The rule $\psi^{\mathrm{PARPA}}$ thus constitutes a modification of PA designed to address the order-reversal problem. Moreover, it preserves the ranking induced by both sets of marginal contributions, $v(j)-v(\emptyset)$ and $v(N)-v(N\setminus j)$, since for any pair $i$ and $j$, if one ordering holds for one type of marginal contribution, it also holds for the other.

\subsection{Adjusted Gately Value}\label{hiraki}
As a different approach to proportional-type AFAs, we consider the following allocation rule, denoted $\psi^{\mathrm{Gately\_adj}}$. This AFA, which is based on Gateley value in classical TU games  (see Gateley \cite{Gateley}) as discussed below, is constructed by combining two marginal contributions for each feature and choosing the mixing parameter so that efficiency is satisfied.

\medskip
\noindent
\textbf{Step 1.}
Determine a scalar $\alpha$ satisfying
\begin{equation}
v(N)-v(\emptyset) =
\sum_{i\in N}\Big\{\alpha\big[v(i)-v(\emptyset)\big]+(1-\alpha)\big[v(N)-v(N\setminus i)\big]\Big\}.
\end{equation}
Solving for $\alpha$ yields
\begin{equation}
\alpha
=\frac{
\{v(N)-v(\emptyset)\}
-\sum_{i\in N}\{v(N)-v(N\setminus i)\}
}{
\sum_{i\in N}\{v(i)-v(\emptyset)\}
-\sum_{i\in N}\{v(N)-v(N\setminus i)\}
}
= 1+ \frac{
\{v(N)-v(\emptyset)\}-
\sum_{i\in N}\{v(i)-v(\emptyset)\}
}{
\sum_{i\in N}\{v(i)-v(\emptyset)\}
-
\sum_{i\in N}\{v(N)-v(N\setminus i)\}
}.
\end{equation}

\medskip
\noindent
\textbf{Step 2.}
Using the value of $\alpha$ obtained in Step 1, define the AFA
\begin{equation}\label{alpha_form}
\psi^{\mathrm{Alpha}}_{j}=
\begin{cases}
\alpha \big[v(j)-v(\emptyset)\big]
+
(1-\alpha)\big[v(N)-v(N\setminus j)\big]
& \text{if } 0\le \alpha \le 1,\\[6pt]
\psi^{\mathrm{PA}}_{j}
& \text{if } \alpha>1,\\[4pt]
\psi^{\mathrm{ROP}}_{j}
& \text{if } \alpha<0.
\end{cases}
\end{equation}

The intuition underlying $\psi^{\mathrm{Gately\_adj}}$ is as follows.  
First, consider an allocation rule that assigns to each feature a convex combination of the two marginal contributions $v(j)-v(\emptyset)$ and $v(N)-v(N\setminus j)$ with mixing parameter $\alpha$. Step 1 determines $\alpha$ so that the efficiency condition is satisfied; such a value of $\alpha$ is uniquely determined whenever the denominator is nonzero.  
If $0\le \alpha\le 1$, the resulting allocation is given by the convex combination in \eqref{alpha_form}.

When $\alpha>1$, the coefficient $(1-\alpha)$ on $v(N)-v(N\setminus j)$ becomes negative.  
Retaining this term may violate order preservation with respect to the marginal contributions $v(N)-v(N\setminus j)$.  
To avoid such violations, we retain only the term based on $v(j)-v(\emptyset)$ and determine the allocation so that efficiency is satisfied.  
In this case, $\psi^{\mathrm{Gately\_adj}}$ coincides with the proportional rule PA.

Similarly, when $\alpha<0$, the coefficient on $v(j)-v(\emptyset)$ becomes negative.  
Keeping this term may violate order preservation with respect to $v(j)-v(\emptyset)$.  
We therefore retain only the term based on $v(N)-v(N\setminus j)$ and determine the allocation so that efficiency is satisfied.  
In this case, $\psi^{\mathrm{Gately\_adj}}$ coincides with the reverse proportional rule ROP.

When $0\le \alpha\le 1$, the resulting allocation coincides with the Gately value investigated in, for example, Gilles and van den Brink \cite{Brink_2025}.  
Thus, $\psi^{\mathrm{Gately\_adj}}$ can be interpreted as a modification of the Gately value designed to address the order-reversal problem.  
Specifically, when $\alpha$ lies outside the unit interval, the rule switches to PA or ROP in order to mitigate potential violations of order preservation.  
However, since both PA and ROP themselves may still exhibit order-reversal issues in certain configurations, this approach does not completely eliminate the problem but rather provides a pragmatic adjustment that reduces its incidence.

\section{Experiments}\label{experiment}
\subsection{Experimental Setup and AFA Specifications}\label{experiment_prep}
In this section, we apply the AFA methods discussed in the previous sections to actual dataset and conduct a performance comparison using computational cost and deviations from precise SHAP as evaluation criteria.

\paragraph{Dataset}
We use the California Housing dataset as the baseline dataset in the experimental analysis.\footnote{The California Housing dataset is available in \texttt{scikit-learn} via the \texttt{fetch\_california\_housing} function (1990 U.S.\ Census data).} This dataset is a standard benchmark for regression tasks. The response variable is the median house value of owner-occupied homes in each district of California. The dataset includes eight input features describing demographic and housing characteristics. Representative features include median income, housing median age, average number of rooms, average number of bedrooms, population, number of households, and geographical coordinates (latitude and longitude).

To systematically compare precise SHAP and the AFA methods investigated in the previous section, we construct a sequence of experimental datasets by gradually increasing the number of input features. Starting from the original eight features, we append additional synthetic features drawn from a standard normal distribution and standardize all features prior to model training and attribution analysis. The number of features is incrementally increased up to 512. This setup allows us to examine how the computational cost of each attribution method scales with the number of features and to evaluate how closely the proposed methods approximate precise SHAP as the number of features (i.e., the number of players in the corresponding XAI--TU game) increases.

\paragraph{Models}
To evaluate the attribution methods under different model classes, we consider two widely used nonlinear regression models: XGBoost and a feedforward neural network.

For XGBoost, we employ the \texttt{XGBRegressor} implementation and tune its hyperparameters using randomized search with three-fold cross-validation. The search space includes the number of estimators (100 to 1000), learning rate (sampled from a uniform distribution between 0 and 0.5), maximum tree depth (2 to 10), subsample ratio (0.5 to 1.0), and column sampling ratio (0.8 to 1.0). We perform 200 random search iterations and select the model that minimizes cross-validated mean squared error. These ranges are commonly used in empirical applications of gradient boosting and provide a sufficiently flexible yet conventional tuning setup.

For the neural network model, we use the \texttt{MLPRegressor} from scikit-learn within a preprocessing pipeline that includes feature standardization. Hyperparameters are tuned via grid search with five-fold cross-validation. We consider two-hidden-layer architectures with moderate widths (ranging from $(64,64)$ to $(256,128)$), ReLU and tanh activation functions, L2 regularization parameters ($\alpha$) between $10^{-7}$ and $10^{-4}$, and initial learning rates of $10^{-3}$ and $3\times 10^{-4}$. Training is conducted with a maximum of 5000 iterations and early stopping based on a held-out validation split. These choices follow widely adopted practices for multilayer perceptrons in tabular regression settings.

Overall, both models are implemented using cross-validated hyperparameter tuning over commonly used parameter ranges, ensuring that the comparison of attribution methods is conducted under conventional and well-established model configurations.

\paragraph{Attribution Methods for Comparison}
We next describe the attribution methods included in the comparative analysis. Our goal is to evaluate the AFA methods discussed in the previous section against precise SHAP and several widely used approximation algorithms.

First, we compute precise SHAP $\psi_{j}^{SHAP}$, which serves as the benchmark attribution and allows us to measure the deviations of other methods from the exact Shapley-based allocation. We then consider several AFA methods discussed in the previous section. Specifically, $\psi_{j}^{ESENSC\_rev2}$ and $\psi_{j}^{PARPA}$ are taken as the representative ES-type and PA-type AFAs, respectively. We also examine the performance of $\psi_{j}^{Gately\_adj}$. $\psi_{j}^{ES}$, $\psi_{j}^{ENSC}$, and $\psi_{j}^{PAROP}$ are also included for references (see Table~\ref{tab:AFAtable}). Finally, we also include widely used approximation algorithms for SHAP, namely Kernel SHAP (Lundberg and Lee \cite{lundberg2017}), Permutation SHAP (Strumbelj and Kononenko \cite{Strumbelj_2010} \cite{Strumbelj_2014}),\footnote{For Permutation SHAP, we fix the number of model evaluations to $(2n+1)\times 10$, where $n$ denotes the number of features, in order to ensure comparability across feature dimensions.} and TreeSHAP (Lundberg et al. \cite{TreeExplainer}). These methods are commonly employed in practical XAI applications to approximate Shapley-based feature attributions when exact computation is infeasible.

The experimental comparison therefore considers three groups of methods: (i) precise SHAP as the benchmark, (ii) the AFA methods discussed in the previous section, and (iii) standard approximation algorithms for SHAP. This setup allows us to examine both computational cost and deviations from precise SHAP across methods as the number of features (i.e., the number of players in the underlying XAI--TU game) increases.

\begin{table}[h]
\caption{Summary of AFAs Used in the Experiments}\label{tab:AFAtable}
\renewcommand{\arraystretch}{1.5}
\begin{small}
\begin{center}
\begin{tabular}{c|c|c}
AFA & \shortstack{Equation} & Literature / Description \\ \hline
$\psi_{j}^{SHAP}$ & (\ref{SHAP_saisho}) & Formulated in Lundberg and Lee \cite{lundberg2017}. Based on the Shapley value \cite{shapley}. \\ \hline
$\psi_{j}^{ES}$ & (\ref{ES}) & Proposed in Condevaux et al. \cite{Condevaux2023}. Based on Equal Surplus (ES). \\ \hline
$\psi_{j}^{ENSC}$ & (\ref{AFA-ENSC}) & Based on Egalitarian Non-Separable Contribution (ENSC) \cite{Dragan_Drissen_Funaki,Drissen_Funaki_1991}. \\ \hline
$\psi_{j}^{ESENSC\_rev2}$ & (\ref{ESENSC-r2}) & 50--50 mixture of ES and ENSC, modified to satisfy the null-player property. \\ \hline
$\psi_{j}^{PAROP}$ & (\ref{PAROP}) & 50--50 mixture of proportional rules based on $v(i)-v(\emptyset)$ and $v(N)-v(N\setminus i)$. \\ \hline
$\psi_{j}^{PARPA}$ & (\ref{PARAP_form}) & A proportional-allocation--type solution adjusted to avoid the ``order-reversal problem.'' \\ \hline
$\psi_{j}^{Gately\_adj}$ & (\ref{alpha_form}) & A modified version of the Gately value (Gilles and van den Brink \cite{Brink_2025}). \\
\end{tabular}
\end{center}
\end{small}
\begin{small}
\raggedright
\ \\
\ \\
(Note) The numbers in the column ``Equation'' correspond to the equation numbers in Section~\ref{main_results}. \par
\end{small}
\end{table}



\subsection{Deviations from (Exact) SHAP}\label{dev_from_shap}
First, to evaluate how closely each method approximates precise SHAP, we construct a deviation measure based on feature-level attribution differences. For each observation and each feature, we first compute the absolute difference between the exact SHAP value and the attribution produced by the method under consideration (either an AFA or an approximation-based SHAP algorithm). We then average these absolute differences across all observations and all features. Finally, the resulting quantity is normalized by the standard deviation of the model predictions so that the measure is scale-invariant and comparable across models and settings.

Using this metric, we compare the degree of deviation from precise SHAP across methods under different feature dimensions. Specifically, we compute the deviation measure for feature dimensions ranging from the original eight features of the California Housing dataset to 10, 12, 14, and 16 features. As the number of features increases further, computing precise SHAP via exact enumeration becomes computationally infeasible, which limits the range over which exact comparisons can be conducted.

\begin{figure}[H]
    \centering
    \caption{Deviation from SHAP (1) neural net model}
\includegraphics[width=0.4\linewidth]{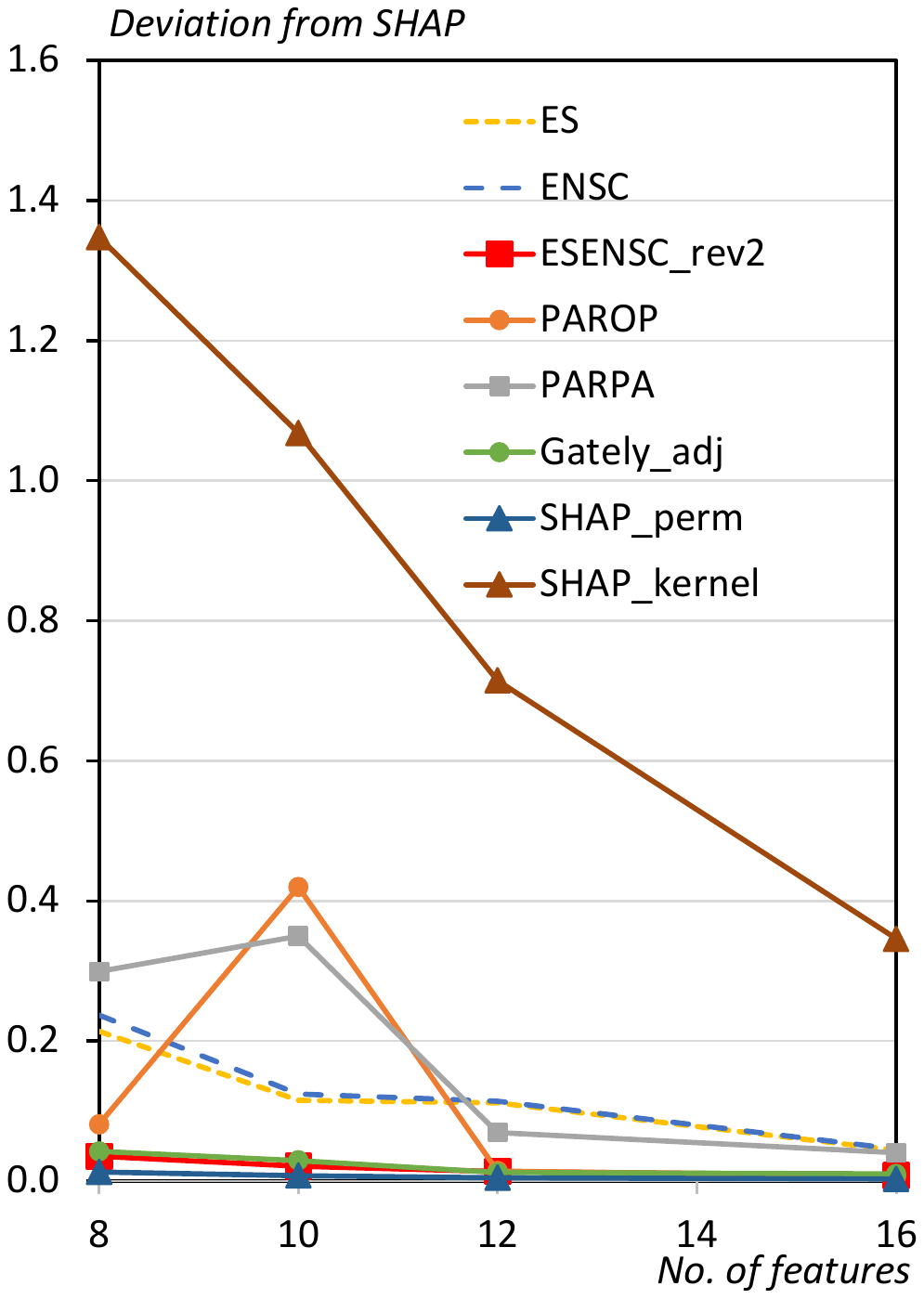}\includegraphics[width=0.41\linewidth]{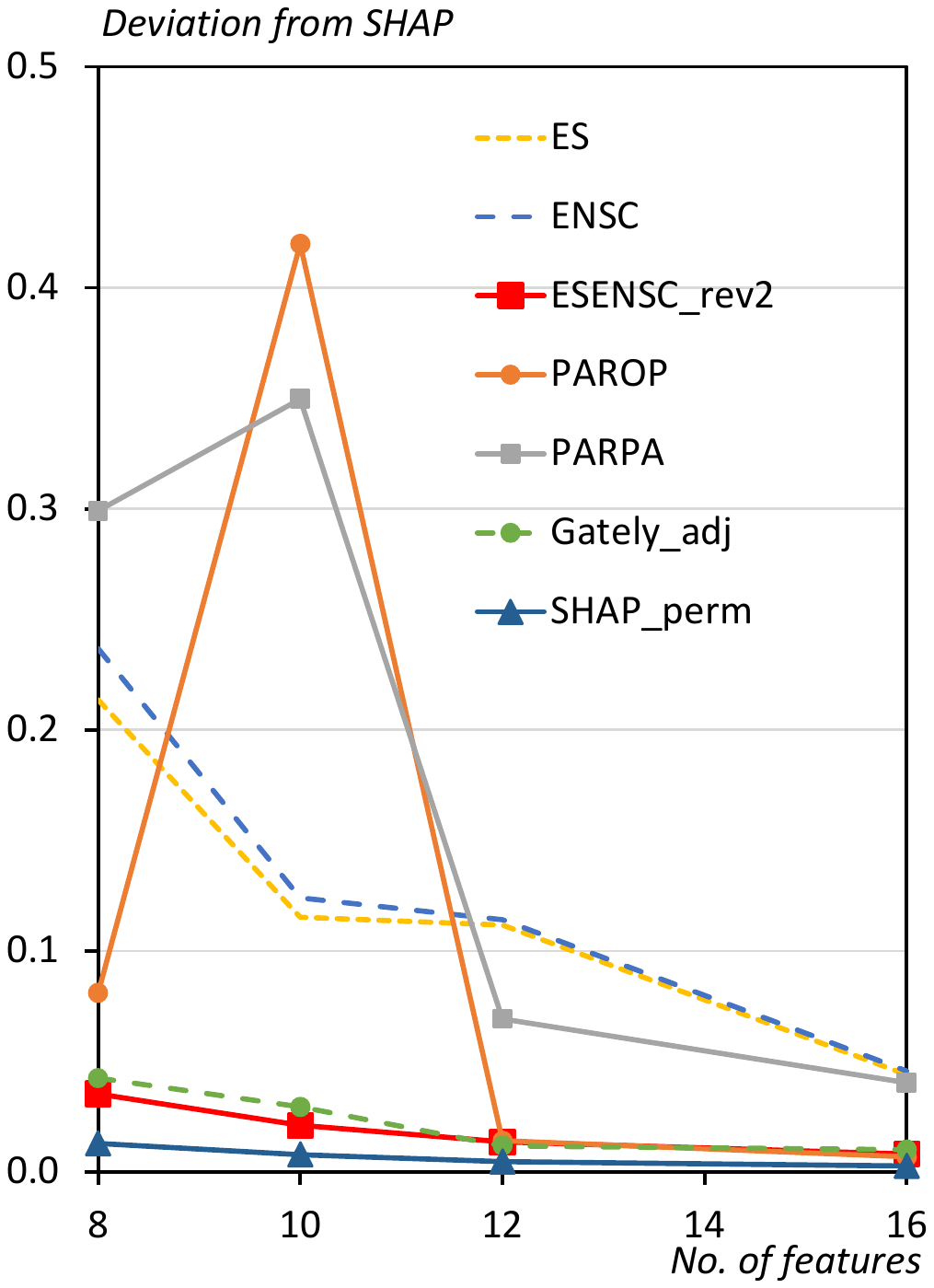}
    \label{chr_dev_from_SHAP_NN}
\end{figure}

Figure \ref{chr_dev_from_SHAP_NN} reports the deviation from precise SHAP for all attribution methods when a neural network is used as the predictive model. The left panel displays the deviation for all AFAs and approximation-based methods. Kernel SHAP exhibits noticeably larger deviations than the other methods across all feature dimensions, making it an outlier in the comparison. The relatively large deviation observed for Kernel SHAP may be attributable to its regression-based approximation scheme. Kernel SHAP estimates Shapley values via a weighted regression over sampled coalitions, and its accuracy depends on the number of sampled evaluations relative to the feature dimension. In the present setup, the deviation tends to decrease as the number of features increases, reflecting the larger evaluation budget and the normalization of the deviation measure. Nevertheless, Kernel SHAP remains less accurate than the other methods considered here. Since Kernel SHAP is not the primary focus of this study, we do not pursue this issue further.

To better visualize the remaining methods, the right panel removes Kernel SHAP and rescales the vertical axis. Several patterns become apparent. For ES-type AFAs, the individual ES and ENSC allocations show relatively large deviations from precise SHAP. In contrast, their equal-weight combination, $\psi^{ESENSC\_rev2}$, substantially reduces the deviation and achieves an accuracy level comparable to Permutation SHAP. The AFA based on the adjusted Gately value also exhibits very small deviations from precise SHAP across all feature dimensions considered. By contrast, AFAs based on proportional-allocation rules display less stable behavior. Both $\psi^{PAROP}$ and $\psi^{PARPA}$ can exhibit pronounced deviations for certain feature dimensions, and their overall deviation patterns are quite similar.

\begin{figure}[H]
    \centering
    \caption{Deviation from SHAP (2) XGBoost model}
\includegraphics[width=0.4\linewidth]{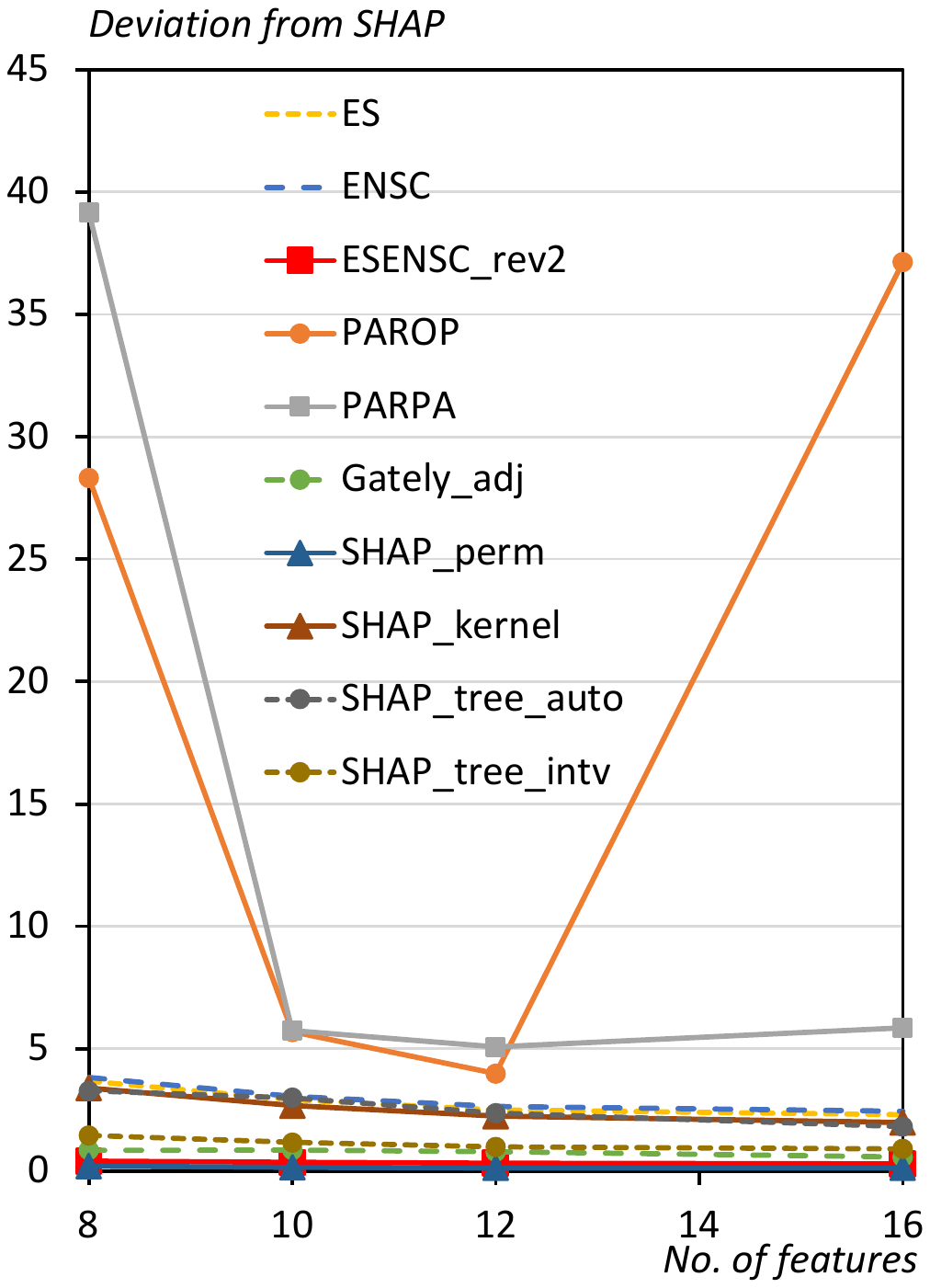}\includegraphics[width=0.4\linewidth]{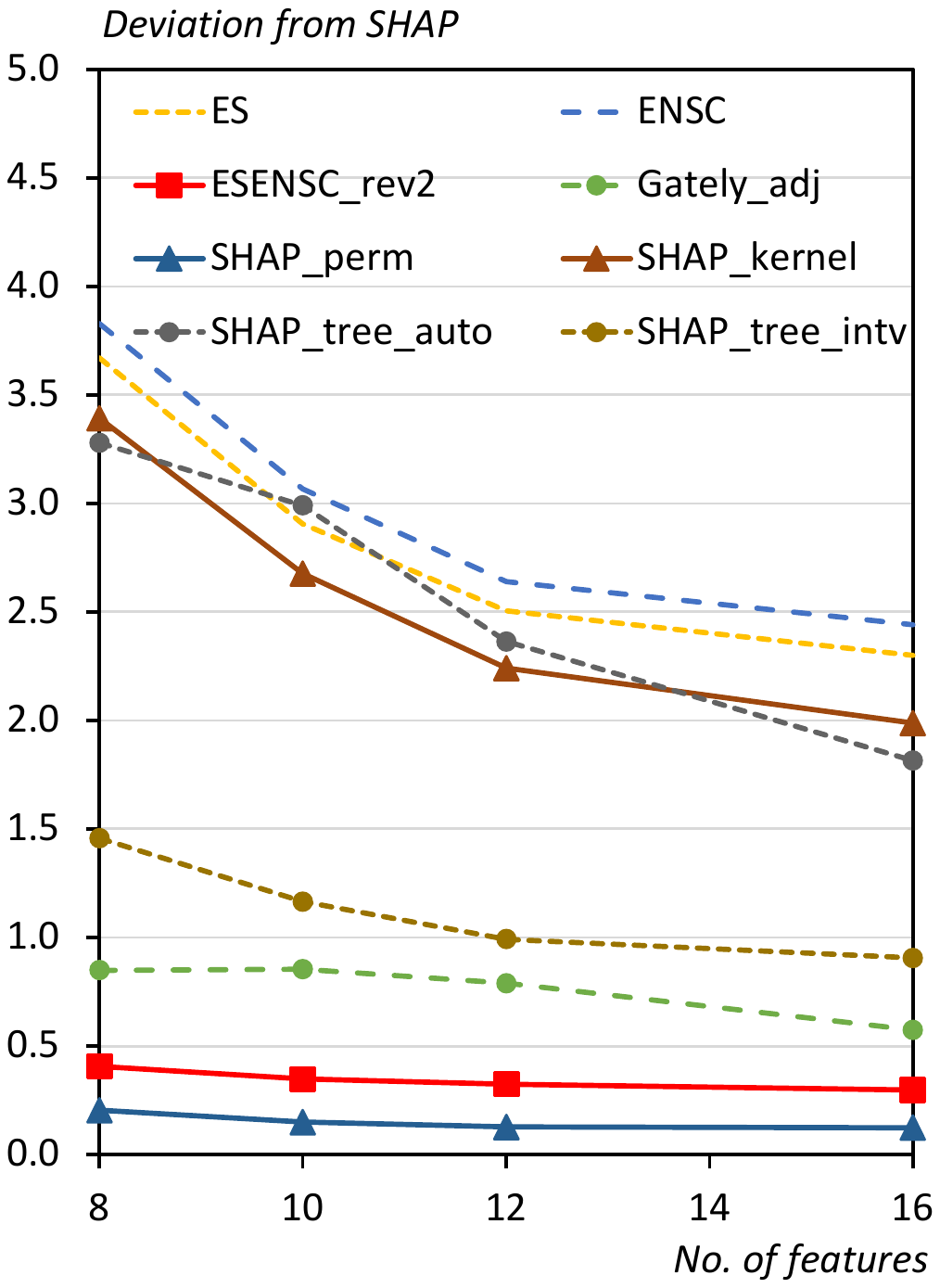}
    \label{chr_dev_from_SHAP_XGB}
\end{figure}

Figure \ref{chr_dev_from_SHAP_XGB} reports the deviation from precise SHAP when XGBoost is used as the predictive model. The left panel displays the deviation for all AFAs and SHAP approximation methods. As in the neural-network case, PA-type AFAs such as $\psi^{PAROP}$ and $\psi^{PARPA}$ exhibit substantially larger deviations from precise SHAP than the other methods. Moreover, the overall patterns of deviation for these two methods are quite similar across feature dimensions.

The right panel removes these two PA-type AFAs and rescales the vertical axis. Several patterns can found. As in the neural-network case, the ES-type mixture $\psi^{ESENSC\_rev2}$ exhibits consistently small deviations from precise SHAP and shows strong approximation performance. Its deviation is comparable to that of Permutation SHAP and clearly smaller than that of Kernel SHAP. Furthermore, it should be pointed out that ESENSC\_rev2 also performs favorably relative to TreeSHAP in this setting. The AFA based on the adjusted Gately value also shows reasonably good approximation accuracy. Although its deviation is slightly larger than in the neural-network case, it remains relatively small overall.\footnote{ 
Across most methods, the deviation tends to decrease as the number of features increases. One possible explanation is that, as additional features are introduced, the average magnitude of individual feature attributions becomes smaller, particularly when many added features have limited predictive relevance. Since the deviation measure is computed as an average over features and normalized by the standard deviation of predictions, this dilution effect can lead to a gradual reduction in the reported deviation. While the present deviation metric provides a useful basis for comparison across methods, its dependence on the number of features suggests that alternative normalization schemes may also be worth exploring. Developing dimension-robust evaluation metrics is left for future research.}

Overall, the results indicate that $\psi^{ESENSC\_rev2}$ achieves consistently small deviations from precise SHAP and performs favorably relative to standard SHAP approximation algorithms. In contrast, PA-type AFAs exhibit substantially larger deviations and do not necessarily provide desirable performance from the perspective of approximation to exact SHAP.

As discussed in Section \ref{main_results}, $\psi^{PARPA}$ was designed to avoid the order-reversal problem inherent in proportional-allocation rules. Nevertheless, the experimental results reveal that $\psi^{PARPA}$ displays deviation patterns remarkably similar to those of $\psi^{PAROP}$, which remains exposed to the order-reversal issue. This observation suggests that proportional-allocation–type AFAs may be sensitive to additional sources of instability beyond the order-reversal problem.

One possible mechanism is illustrated by a simple two-player TU game defined by $v(\emptyset)=0$, $v(1)=10$, $v(2)=-9$, and $v(N)=15$. Since $v(1)+v(2)$ has the same sign as $v(N)$, the order-reversal problem does not arise. However, the proportional scaling factor
\[
\frac{v(N)}{v(1)+v(2)} = \frac{15}{1} = 15
\]
becomes large because the denominator is close to zero. As a result, the proportional allocation yields extreme payoffs, $\psi_1^{PA}=150$ and $\psi_2^{PA}=-135$. This example illustrates that in games where positive and negative contributions coexist, resolving the order-reversal problem alone does not necessarily prevent large distortions in proportional-allocation–type solutions.

\subsection{Computation Time}\label{comp_cost}
Next, we compare the computational cost of the attribution methods. Figure \ref{comp_time_NN} reports the computation time as a function of the number of features when a neural network is used as the predictive model. The left panel shows the results for feature dimensions up to 16, for which precise SHAP can still be computed via exact enumeration. As expected from its combinatorial definition, the computation time of precise SHAP (black line) grows exponentially with the number of features. In contrast, all AFA methods require substantially less computation time than precise SHAP across the entire range.

The right panel extends the comparison to higher-dimensional settings, increasing the number of features up to 512. Since precise SHAP becomes computationally infeasible in this regime, only AFA methods and approximation-based SHAP algorithms are shown, together with a reference line indicating the computation time of precise SHAP for feature dimensions up to 16. The results indicate that the proposed AFA methods scale approximately linearly with the number of features and remain computationally efficient even in high-dimensional settings. Their computation time is substantially smaller than that of sampling-based methods such as Permutation SHAP and, to a lesser extent, Kernel SHAP.

It is worth noting that the computation time of sampling-based SHAP approximations can be reduced by adjusting hyperparameters such as the number of model evaluations. However, doing so typically worsens the approximation accuracy relative to precise SHAP, implying an inherent trade-off between computational cost and approximation quality. In contrast, the AFA methods considered in this paper do not rely on such tuning parameters and can be computed directly once the coalition values are specified. This parameter-free structure constitutes an additional practical advantage of the proposed AFA framework.

These results highlight a clear computational advantage of the proposed AFAs over both exact SHAP and sampling-based SHAP approximations.

\begin{figure}[H]
    \centering
    \caption{Computation time (1) Neural net model}
\includegraphics[width=0.4\linewidth]{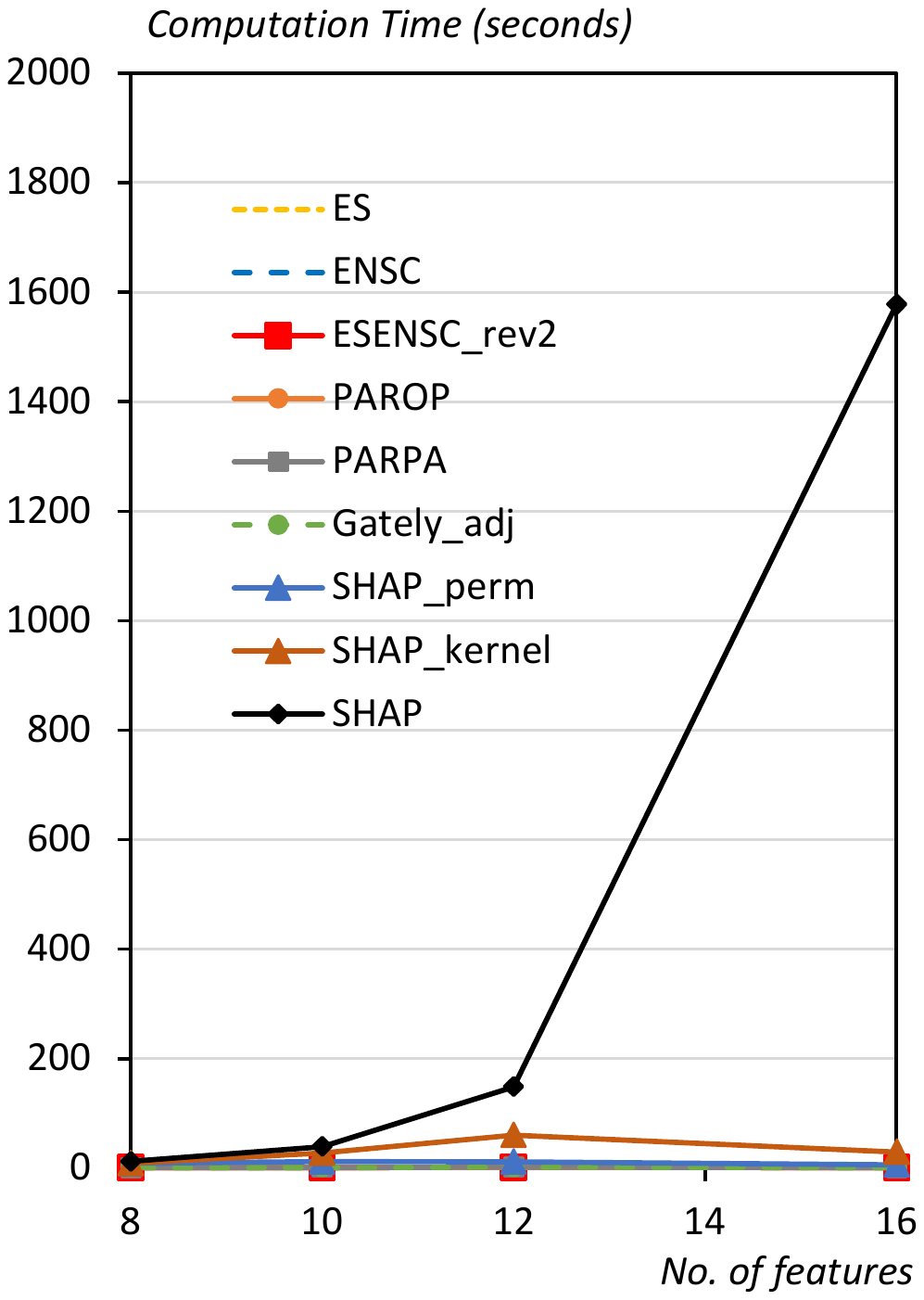}\includegraphics[width=0.4\linewidth]{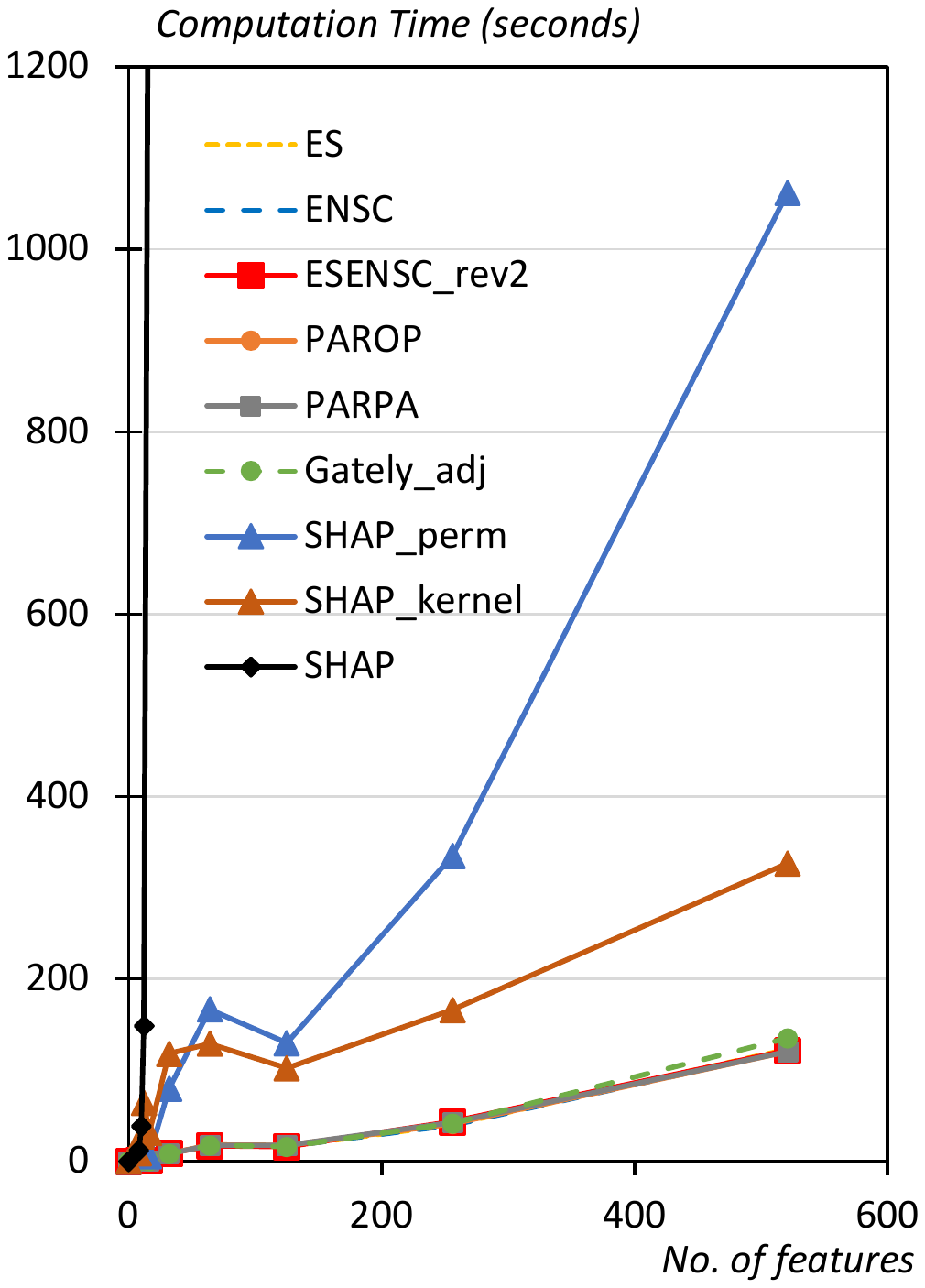}
    \label{comp_time_NN}
\end{figure}

We next examine the computational cost when XGBoost is used as the predictive model. Figure~Z reports the computation time as a function of the number of features. The left panel shows the results for feature dimensions up to 16, for which precise SHAP can still be computed exactly. Consistent with the neural-network case, the computation time of precise SHAP increases rapidly with the number of features and becomes substantially larger than that of all other attribution methods even at moderate dimensionality.

All AFA methods considered in this paper require only a small fraction of the computation time needed for precise SHAP. Moreover, the proposed ES-type AFA, in particular $\psi^{ESENSC\_rev2}$, remains computationally lightweight across all feature dimensions examined. Compared with sampling-based SHAP approximations, the proposed AFA methods also exhibit favorable computational efficiency. Permutation SHAP and Kernel SHAP require noticeably more computation time, while TreeSHAP provides faster evaluation but still does not match the consistently low cost of the proposed AFA methods.

Taken together with the neural-network results, these findings indicate that the proposed AFAs scale efficiently with the number of features and remain computationally practical even in moderately high-dimensional settings. In contrast to sampling-based SHAP approximations, which require the choice of hyperparameters controlling the number of model evaluations and thus involve a trade-off between computational cost and approximation accuracy, the AFA methods considered here are parameter-free once the coalition values are specified. This property constitutes a practical advantage, as it allows direct computation of feature attributions without additional tuning while maintaining favorable approximation performance relative to precise SHAP. These results highlight that the proposed AFAs provide a favorable balance between computational efficiency and approximation accuracy relative to precise SHAP. 

\begin{figure}[H]
    \centering
    \caption{Computation time (2) XGBoost model}
\includegraphics[width=0.4\linewidth]{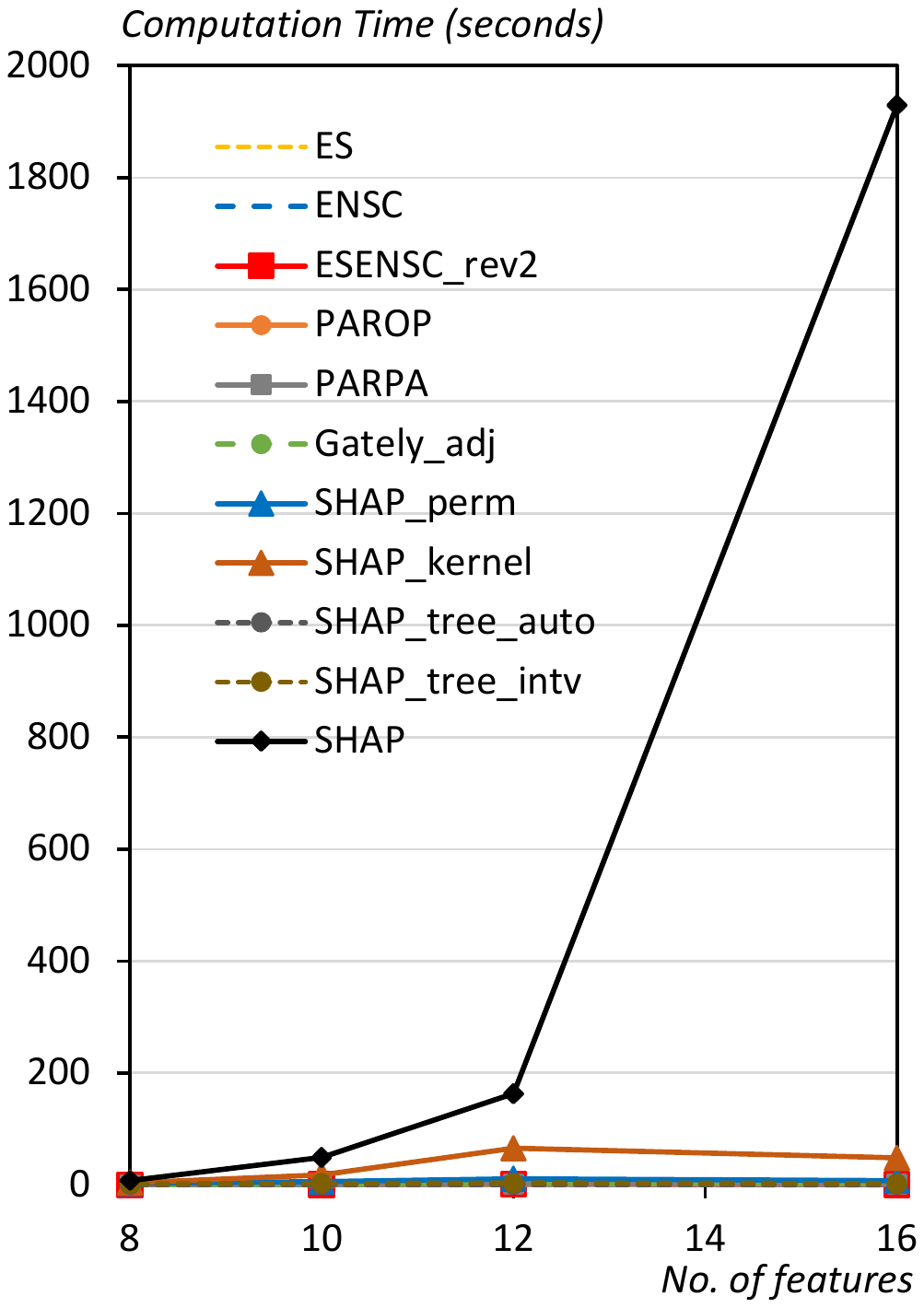}\includegraphics[width=0.4\linewidth]{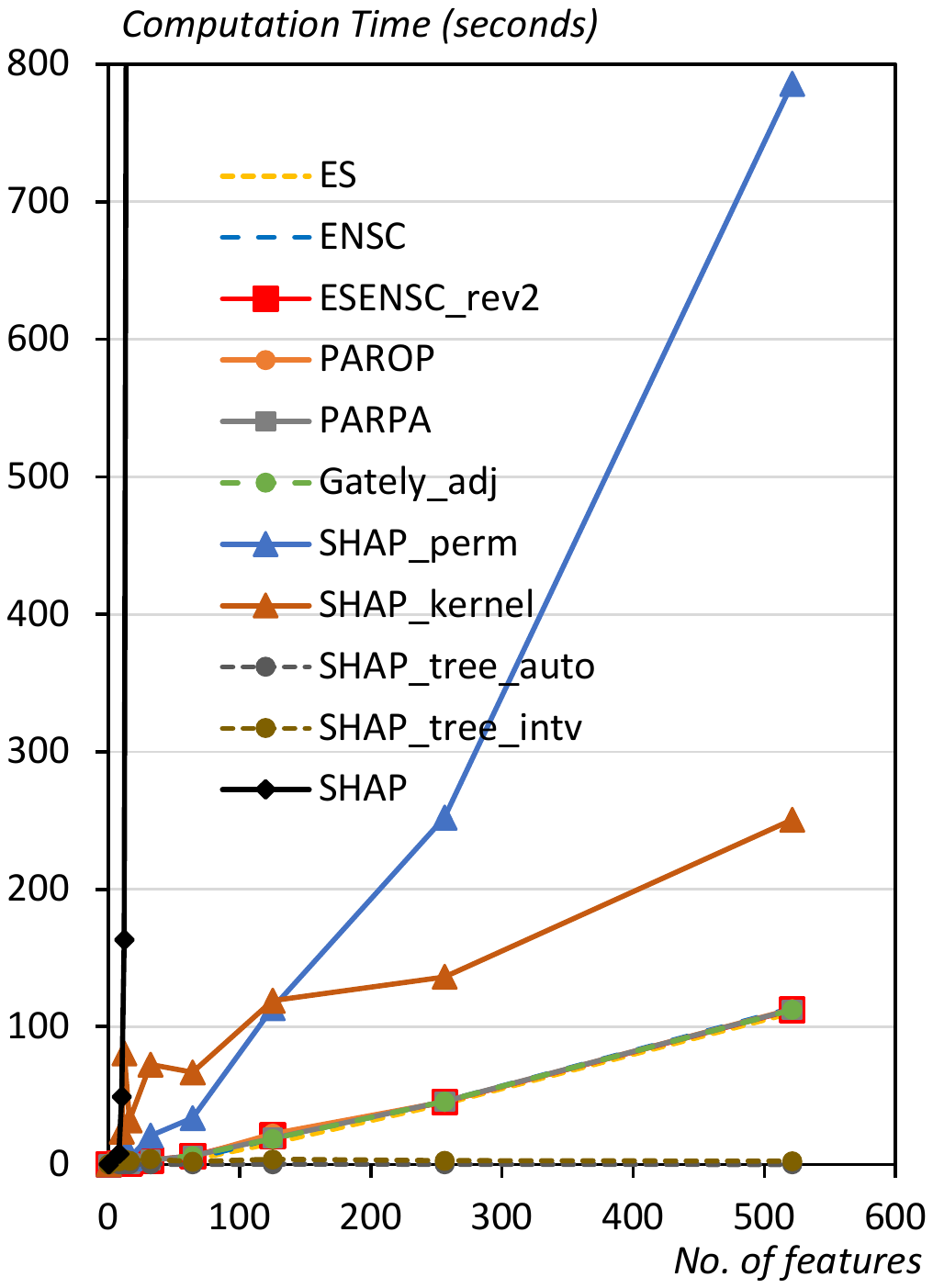}
    \label{comp_time_NN}
\end{figure} 

\section{Axiomatization of ESENSC\_rev2}\label{axiomatization}
The previous section established that $\psi^{\mathrm{ESENSC_rev2}}$ has been shown to achieve high approximation accuracy to Exact SHAP and computational efficiency. 
We now turn to its theoretical foundations and provide an axiomatic characterization. Specifically, we show that this rule is uniquely determined by a set of axioms that are natural in the context of XAI–TU games.

Our axiomatization partly parallels classical axiomatizations of the Shapley value. By employing several axioms that overlap with those used in the cooperative-game-theoretic characterization of the Shapley value, we clarify both the similarities and the differences between SHAP and $\psi_{j}^{\mathrm{ESENSC\_rev2}}$.

\subsection{An existing axiomatic characterization of the Shapley value in classical TU games}\label{donyuu}
First, we refer to existing axiomatizations of SHAP in classical TU games. To relate our framework to the classical cooperative-game-theoretic literature, we restrict attention in this subsection to the standard domain of TU games in which the value of the empty coalition is normalized to zero: 
$\bar{\mathbb{V}}=
\{\, v:2^N\to\mathbb{R}
\mid
v(\emptyset)=0
\,\}.$ Let $\psi$ denote an arbitrary AFA. 

\begin{rem}[Corollary 5 in Casajus \cite{Casajus11}]\label{sankocasajus}
On $\bar{\mathbb{V}}$, an AFA $\psi$ satisfies the following efficiency, the null player property, and differential marginality if and only if $\psi=\psi^{SHAP}$.
\begin{itemize}
\item \textbf{Efficiency.}  
For any $v \in \bar{\mathbb{V}}$,  
$$
\sum_{i \in N}\psi_i(v)=v(N)-v(\emptyset).\footnote{From (\ref{AFAdef}), AFA satisfies the efficiency by denifition. However, in terms of comparing our results to the existing axiomatic analysis in TU game, in this section we explicity state the efficiency axiom.}
$$

\item \textbf{Null player property.}  
Let $v \in \bar{\mathbb{V}}$ and $i \in N$ satisfy $v(S \cup i)=v(S)$ for any $S \subseteq N \setminus i$.  
Then $\psi_i(v)=0$.

\item \textbf{Differential marginality.}  
Let $v,w \in \bar{\mathbb{V}}$ and $i,j \in N$ satisfy  
$v(S \cup i)-v(S \cup j)=w(S \cup i)-w(S \cup j)$ for any $S \subseteq N \setminus \{i,j\}$. 
Then
$$
\psi_i(v)-\psi_j(v)=\psi_i(w)-\psi_j(w).
$$
\end{itemize}
\end{rem}

It is well known that differential marginality is equivalent to fairness in the sense of van den Brink \cite{Brink_2002}; see Proposition 3 in Casajus \cite{Casajus11}. Moreover, fairness (and therefore differential marginality) can be derived from the combination of two well-known axioms, namely symmetry and additivity (Proposition 2.4 in van den Brink \cite{Brink_2002}). Thus, differential marginality is not an entirely independent requirement, but can be understood as a consequence of these more primitive and widely accepted axioms.\footnote{This result is considered to remain valid even when $v(\emptyset)\neq 0$. Since the cited paper develops the arguments under the normalization $v(\emptyset)=0$, we adopt the same convention here for consistency.}

\subsection{Axiomatic characterization of ESENSC\_rev2}
In this subsection, with reference to the axiomatic characterization of SHAP presented in Subsection~\ref{donyuu}, we provide an axiomatization of $\mathrm{ESENSC\_rev2}$. Throughout this subsection, we assume $n \neq 3$. This restriction is required for Claim \ref{rem_nnoteq3} in the \textbf{Proof of Theorem \ref{maintheoremforaxiom}} below. In other words, if Claim \ref{rem_nnoteq3} could be established for the case $n=3$, the restriction could be removed. We also restrict attention to game $v$ in the class
$$
\mathbb{V}
=
\{\, v:2^N\to\mathbb{R}
\mid
\text{there exists } i\in N \text{ such that } v(i)\neq v(\emptyset)
\text{ or }
v(N)\neq v(N\setminus i)
\,\}.
$$


We now state the main result of this section.

\begin{theo}\label{maintheoremforaxiom}
An AFA $\psi$ satisfies the following efficiency, the null player property, restricted differential marginality, intermediate inessential game, and reduction in computational complexity if and only if $\psi=\psi^{\mathrm{ESENSC\_rev2}}$.
\begin{itemize}
\item \textbf{Efficiency.}  
For any $v \in \mathbb{V}$,  
$$
\sum_{i \in N}\psi_i(v)=v(N)-v(\emptyset).
$$

\item \textbf{Null player property.}  
Let $v \in \mathbb{V}$ and $i \in N$ satisfy $v(S \cup i)=v(S)$ for any $S \subseteq N \setminus i$.  
Then $\psi_i(v)=0$.

\item \textbf{Restricted differential marginality.}  
Let $v,w \in \mathbb{V}$ and $i,j \in N$ satisfy  
\begin{itemize}
\item[(i)] $v(S \cup i)-v(S \cup j)=w(S \cup i)-w(S \cup j)$ for any $S \subseteq N \setminus \{i,j\}$,
\item[(ii)] $v(i) \neq v(\emptyset)$ or $v(N) \neq v(N \setminus i)$, and similarly for $j$ in both $v$ and $w$,
\item[(iii)] $\{k \in N \mid v(k) \neq v(\emptyset) \text{ or } v(N) \neq v(N \setminus k) \}=\{k \in N \mid w(k) \neq w(\emptyset) \text{ or } w(N) \neq w(N \setminus k) \}$.
\end{itemize}
Then $$\psi_i(v)-\psi_j(v)=\psi_i(w)-\psi_j(w).$$
\item \textbf{Intermediate inessential game.}  Let $v \in \mathbb{V}$ satisfy
$$v(N)-v(\emptyset)= \sum_{j \in N}\frac{1}{2}\Bigl(v(j)-v(\emptyset)+v(N)-v(N \setminus j)\Bigr).$$
Then for any $i \in N$, $$\psi_i(v)=\frac{1}{2}\Bigl(v(i)-v(\emptyset)+v(N)-v(N \setminus i)\Bigr).$$
\item \textbf{Reduction in computational complexity.}  
For any $v,w \in \mathbb{V}$, if $v(S)=w(S)$ for $|S|=0,1,n-1,$ and $n$, then $\psi(v)=\psi(w)$.
\end{itemize}
\end{theo}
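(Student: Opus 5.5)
The proof has two parts: \emph{existence}, that $\psi^{\mathrm{ESENSC\_rev2}}$ satisfies the five axioms, and \emph{uniqueness}. Throughout, write
\[
a_i(v)=\tfrac12\bigl(v(i)-v(\emptyset)+v(N)-v(N\setminus i)\bigr),\qquad K(v)=\{k\in N\mid v(k)\neq v(\emptyset)\text{ or } v(N)\neq v(N\setminus k)\}.
\]

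\textbf{Existence.} Each axiom is checked directly from formula (\ref{ESENSC-r2}).
\begin{itemize}
\item Efficiency: sum the formula over $K(v)$; players outside $K(v)$ receive zero.
\item Null players: a null player $i$ has $v(i)=v(\emptyset)$ and $v(N)=v(N\setminus i)$, so it falls in the zero branch.
\item Reduction in computational complexity: the formula uses only coalitions of size $0,1,n-1,n$.
\item Intermediate inessential game: the residual term is zero, so every $i\in K(v)$ gets $a_i(v)$, and every $i\notin K(v)$ gets $0=a_i(v)$.
\item Restricted differential marginality: for $i,j\in K(v)=K(w)$, the common residual term cancels, so $\psi_i-\psi_j=a_i-a_j$. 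Taking $S=\emptyset$ in (i) gives $v(i)-v(j)=w(i)-w(j)$, and taking $S=N\setminus\{i,j\}$ gives $v(N\setminus j)-v(N\setminus i)=w(N\setminus j)-w(N\setminus i)$. Hence $a_i(v)-a_j(v)=a_i(w)-a_j(w)$.
\end{itemize}

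\textbf{Uniqueness, step 1: players outside $K(v)$ get zero.} Let $\psi$ satisfy the axioms, fix $v\in\mathbb{V}$ and $k\notin K(v)$. I will build $w'\in\mathbb{V}$ that agrees with $v$ on all coalitions of size $0,1,n-1,n$ and in which $k$ is a genuine null player. Then the null player property gives $\psi_k(w')=0$, and reduction in computational complexity gives $\psi_k(v)=\psi_k(w')=0$.

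For $k$ to be null in $w'$ we need $w'(S\cup k)=w'(S)$ for every $S\subseteq N\setminus k$. The prescribed sizes impose:
\begin{itemize}
\item $w'(k)=w'(\emptyset)$ and $w'(N)=w'(N\setminus k)$, which hold because $k\notin K(v)$;
\item for each $j\neq k$, $w'(\{j,k\})=v(j)$ and $w'(N\setminus\{j,k\})=v(N\setminus j)$.
\end{itemize}
Setting, for each middle coalition, $w'(S\cup k)=w'(S)$ with $S\subseteq N\setminus k$ is then consistent, and everything else is free.

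The constraints clash only when a size-$2$ coalition is also a size-$(n-1)$ coalition whose value is prescribed, that is, when $n=3$. In that case $w'(\{j,k\})=v(N\setminus l)$ is fixed by reduction and need not equal $v(j)$. This is exactly where $n\neq3$ is needed; it is the content of the Claim the paper refers to.
\begin{itemize}
\item For $n=2$ the only issue is $\{j,k\}=N$, and $v(N)=v(N\setminus k)=v(j)$ already holds.
\item For $n\ge4$ the sizes $2,\dots,n-2$ are untouched by reduction, and I define $w'$ there by copying values from $S\setminus k$.
\end{itemize}
Membership $w'\in\mathbb{V}$ is inherited from $v$, since membership depends only on singletons and size-$(n-1)$ coalitions. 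I expect this step to be the main obstacle: the consistency check for $n\ge4$ and the separate handling of small $n$.

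\textbf{Uniqueness, step 2: players inside $K(v)$.} Define $w=v+u$, where $u$ is a symmetric game $u(S)=g(|S|)$ with
\[
g(0)=g(1)=0,\qquad g(n-1)=g(n)=c .
\]
Because $u$ depends only on coalition size, it does not change $v(S\cup i)-v(S\cup j)$, so condition (i) of restricted differential marginality holds. It leaves $w(k)-w(\emptyset)$ and $w(N)-w(N\setminus k)$ equal to their values in $v$, so $K(w)=K(v)$ and condition (iii) holds. It changes $w(N)-w(\emptyset)$ by $c$ and leaves every $a_i$ unchanged. Choosing
\[
c=\sum_i a_i(v)-\bigl(v(N)-v(\emptyset)\bigr)
\]
makes $w$ an intermediate inessential game, so $\psi_i(w)=a_i(w)=a_i(v)$ for every $i$.

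By restricted differential marginality, for $i,j\in K(v)$ we get $\psi_i(v)-\psi_j(v)=a_i(v)-a_j(v)$. Step 1 gives zero to every player outside $K(v)$. Efficiency then forces $\psi_i(v)=a_i(v)+\bigl(v(N)-v(\emptyset)-\sum_{l\in N}a_l(v)\bigr)/|K(v)|$ for $i\in K(v)$, which is formula (\ref{ESENSC-r2}); note that $\sum_{l\in N}a_l=\sum_{l\in K}a_l$ because $a_l=0$ for $l\notin K$. If $|K(v)|=1$, efficiency alone gives the result. Since $v\in\mathbb{V}$, $K(v)\neq\emptyset$, so the division is well defined.
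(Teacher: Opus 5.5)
Your proposal is correct and follows essentially the paper's proof. Step 1 is the paper's Claim 2: an auxiliary game agreeing with $v$ on sizes $0,1,n-1,n$ in which the feature is a genuine null player, with $n\neq 3$ needed exactly where you locate it; you copy middle values downward from $T\setminus k$, whereas the paper copies upward via $v^i(S)=v(S\cup i)$. Step 2 is the paper's game $\bar v$, i.e.\ your $v+u$ with $g=0$ on sizes up to $n-2$, which lets you merge the paper's separate cases into one argument; the only implicit point is that for $n\le 2$ your $u$ is well defined only because $c=0$ automatically there (every game is then intermediate inessential), which the paper records explicitly.
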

\textbf{Proof of Theorem \ref{maintheoremforaxiom}:} \ \\ 
We first prove the “if” part with the following Claim \ref{hajimeclaim}: 
\begin{claim}\label{hajimeclaim}
$\psi^{\mathrm{ESENSC\_rev2}}$ satisfies efficiency, the null player property, restricted differential marginality, intermediate inessential game, and reduction in computational complexity.
\end{claim}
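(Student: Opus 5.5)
We verify the five axioms one at a time, directly from the closed form (\ref{ESENSC-r2}). Throughout, for $v\in\mathbb{V}$ write
\[
m_k(v)=\tfrac12\bigl(v(k)-v(\emptyset)+v(N)-v(N\setminus k)\bigr), \qquad A(v)=\{k\in N\mid v(k)\neq v(\emptyset)\text{ or } v(N)\neq v(N\setminus k)\},
\]
and $R(v)=v(N)-v(\emptyset)-\sum_{k\in N}m_k(v)$. Then $\psi_j^{\mathrm{ESENSC\_rev2}}(v)=0$ for $j\notin A(v)$, and $\psi_j^{\mathrm{ESENSC\_rev2}}(v)=m_j(v)+R(v)/|A(v)|$ for $j\in A(v)$. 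Since $v\in\mathbb{V}$ means exactly $A(v)\neq\emptyset$, the rule is well defined on $\mathbb{V}$.

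\emph{Efficiency.} For $j\notin A(v)$ both marginal terms vanish, so $m_j(v)=0$. Hence $\sum_{k\in N}m_k(v)=\sum_{k\in A(v)}m_k(v)$. Summing the payoffs gives $\sum_{k\in A(v)}m_k(v)+R(v)=v(N)-v(\emptyset)$.

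\emph{Null player property.} A null player $i$ has $v(i)=v(\emptyset)$ (take $S=\emptyset$) and $v(N)=v(N\setminus i)$ (take $S=N\setminus i$). So $i\notin A(v)$ and $\psi_i=0$.

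\emph{Intermediate inessential game.} The hypothesis is exactly $R(v)=0$. Then $\psi_i=m_i(v)$ for $i\in A(v)$, and $\psi_i=0=m_i(v)$ for $i\notin A(v)$.

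\emph{Reduction in computational complexity.} The formula uses only $v(\emptyset)$, $v(k)$, $v(N\setminus k)$ and $v(N)$, and the same is true of $A(v)$. Hence agreement of $v$ and $w$ on coalitions of sizes $0,1,n-1,n$ forces $\psi(v)=\psi(w)$. (When $n\le 2$ these sizes cover all coalitions anyway.)

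\emph{Restricted differential marginality.} This is the step that needs a little care. By (ii), $i,j\in A(v)$ and $i,j\in A(w)$. By (iii), $|A(v)|=|A(w)|$. The common term $R/|A|$ then cancels in each difference, so
\[
\psi_i(v)-\psi_j(v)=m_i(v)-m_j(v)=\tfrac12\bigl[(v(i)-v(j))+(v(N\setminus j)-v(N\setminus i))\bigr],
\]
and likewise for $w$. Apply hypothesis (i) twice:
\begin{itemize}
\item with $S=\emptyset$, it gives $v(i)-v(j)=w(i)-w(j)$;
\item with $S=N\setminus\{i,j\}$, it gives $v(N\setminus j)-v(N\setminus i)=w(N\setminus j)-w(N\setminus i)$, since $S\cup i=N\setminus j$ and $S\cup j=N\setminus i$.
\end{itemize}
Therefore $m_i(v)-m_j(v)=m_i(w)-m_j(w)$, which is the required equality. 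The degenerate case $i=j$ is trivial.

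The only real obstacle is this bookkeeping. We must make sure the redistribution denominator and the set of recipients coincide across $v$ and $w$, which is precisely what conditions (ii) and (iii) deliver. The other four axioms are immediate from the formula.
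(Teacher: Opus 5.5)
Your proof is correct and follows the same route as the paper, which calls four of the axioms immediate and checks restricted differential marginality by the same computation $\psi_i(v)-\psi_j(v)=\frac12\bigl(v(i)-v(j)-v(N\setminus i)+v(N\setminus j)\bigr)$, combined with (i) at $S=\emptyset$ and $S=N\setminus\{i,j\}$. One small correction: the term $R(v)/|A(v)|$ already cancels within each game once $i,j\in A(v)$ and $i,j\in A(w)$, so only (ii) is needed there; (iii) is superfluous for this direction, and the denominators need not coincide across $v$ and $w$ as your closing remark suggests.
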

\textbf{Proof of Claim \ref{hajimeclaim}}: \ Efficiency, the null player property, inessential game, and reduction in computational complexity are apparent. 

egarding restricted differential marginality, let $v,w \in \mathbb{V}$ and $i,j \in N$ satisfying (i),(ii),(iii) in the definition of the axiom. 
Then, 
\begin{multline*}
\psi^{\mathrm{ESENSC\_rev2}}_i(v)-\psi^{\mathrm{ESENSC\_rev2}}_j(v)=\frac{1}{2}(v(i)-v(j)-v(N \setminus i)+v(N \setminus j))\\
=\frac{1}{2}(w(i)-w(j)-w(N \setminus i)+w(N \setminus j))=\psi^{\mathrm{ESENSC\_rev2}}_i(w)-\psi^{\mathrm{ESENSC\_rev2}}_j(w). \blacksquare
\end{multline*}
We then prove the “Only-if” part with the following Claim \ref{claimsono2} and Claim \ref{claimsono3}. Let $\psi$ be an AFA satisfying five axioms in Theorem \ref{maintheoremforaxiom}. 
\begin{claim}\label{claimsono2}
From the null player property and reduction in computational complexity, for any $v \in \mathbb{V}$ and $i \in N$ with $v(i)=v(\emptyset)$ and $v(N)=v(N \setminus i)$, $\psi_i(v)=0$. 
\end{claim}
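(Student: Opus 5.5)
Given $v \in \mathbb{V}$ and $i\in N$ with $v(i)=v(\emptyset)$ and $v(N)=v(N\setminus i)$, I will build an auxiliary game $w$ in which $i$ is a genuine null player, while $w$ agrees with $v$ on every coalition of size $0,1,n-1,n$. Then reduction in computational complexity gives $\psi(v)=\psi(w)$, and the null player property gives $\psi_i(w)=0$, hence $\psi_i(v)=0$.

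For the construction I will let $w(S)=v(S)$ whenever $|S|\in\{0,1,n-1,n\}$. For $2\le |S|\le n-2$ I will set $w(S\cup i)=w(S)$ whenever $i\notin S$, so that $i$ contributes nothing on intermediate coalitions. Concretely, for an intermediate coalition $S$ containing $i$, I will assign $w(S)=w(S\setminus i)$. Values on the remaining intermediate coalitions not containing $i$ are free, and I will fix them by a rule that is compatible with the boundary pairs.

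The remaining task is to check that $i$ is null in $w$ on every pair $(S, S\cup i)$, including pairs that touch the fixed levels.
\begin{itemize}
\item $S=\emptyset$: we need $w(i)=w(\emptyset)$, which is exactly the hypothesis $v(i)=v(\emptyset)$.
\item $S=N\setminus i$: we need $w(N)=w(N\setminus i)$, which is the hypothesis $v(N)=v(N\setminus i)$.
\item $|S|=1$, say $S=\{j\}$ with $j\neq i$: then $\{i,j\}$ has size $2$. If $n\ge 4$, size $2$ is intermediate, so I set $w(\{i,j\})=v(j)$.
\item $|S|=n-2$, $i\notin S$: then $S\cup i=N\setminus k$ has size $n-1$ and is fixed. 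So I must set $w(S)=v(N\setminus k)$ for $S=N\setminus\{i,k\}$, which is allowed when $n-2\ge 2$, i.e.\ $n\ge4$.
\end{itemize}
Consistency then requires that no coalition receives two conflicting assignments. A size-$2$ set $\{i,j\}$ and a size-$(n-2)$ set $N\setminus\{i,k\}$ avoiding $i$ could coincide only if their sizes match and $i$ membership matches. The former contains $i$ and the latter does not, so they are distinct. However, when $n=4$ the set $\{i,j\}$ has size $2=n-2$, and it is the $S\cup i$ partner of $\{j\}$, not of an $(n-2)$-set avoiding $i$; so again there is no clash.

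The fixed nonintermediate coalitions themselves never need to change. For $n=1,2$ every coalition has size in $\{0,1,n-1,n\}$, so $i$ is already null in $v$ by the hypotheses. The case $n=3$ is exactly where it fails: $\{i,j\}$ has size $2=n-1$ and is fixed, so $v(\{i,j\})=v(j)$ cannot be forced. This matches the paper's exclusion of $n=3$, though here the claim is stated generally.

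Finally, I must verify $w\in\mathbb{V}$. Membership in $\mathbb{V}$ depends only on the values at sizes $0,1,n-1,n$, which coincide with those of $v$, so $w\in\mathbb{V}$ automatically. The main obstacle is the bookkeeping above showing that the intermediate assignments are consistent (well defined); I expect it to be routine for $n\ge4$.
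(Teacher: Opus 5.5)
Your proposal is correct and follows the paper's proof. The paper likewise builds an auxiliary game $v^i$ that agrees with $v$ on all coalitions of size $0,1,n-1,n$ and in which $i$ is a null player, applies reduction in computational complexity and then the null player property, and treats $n=1,2$ directly; the only difference is which coalition in each free pair gets overwritten (the paper sets $v^i(S)=v(S\cup i)$ for intermediate $S\not\ni i$ and $v^i(S)=v(S\setminus i)$ for two-element $S\ni i$), and your observation that the construction breaks for $n=3$ matches the paper's standing assumption $n\neq 3$.
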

\textbf{Proof of Claim \ref{claimsono2}}: \ If $n=1$ or $2$, the null player property implies the fact. 
%
%
If $n \ge 4$, given $v \in \mathbb{V}$ and $i \in N$ with $v(i)=v(\emptyset)$ and $v(N)=v(N \setminus i)$, let $v^i:2^N \to \mathbb{R}$, where 
\begin{equation*}
v^i(S)=\begin{cases}
v(S) & \text{ if } S \ni i \text{ and } s \neq 2\\
v(S \setminus i) & \text{ if } S \ni i \text{ and } s=2\\
v(S) & \text{ if } S \not\ni i \text{ and } s=0,1,n-1\\
v(S \cup i ) & \text{ if } S \not\ni i \text{ and } s \neq 0,1,n-1.
\end{cases}
\end{equation*}
By definition,  
$v^i(\emptyset)=v(\emptyset)$, 
for any $j \in N$, $v^i(j)=v(j)$ and 
$v^i(N \setminus j)=v(N \setminus j)$,\footnote{Note that because $n \ge 4$, it holds that $n-1 \neq 2$.} and 
$v^i(N)=v(N)$.
Hence, $v^i \in \mathbb{V}$.  

From reduction in computational complexity, 
\begin{equation}\label{viv}
\psi(v^i)=\psi(v). 
\end{equation}

Additionally, by definition, for any $S \subseteq N \setminus i$, 
\begin{equation*}
v^i(S \cup i)-v^i(S)
=\begin{cases}
v(i)-v(\emptyset)=0 & \text{ if } s=0,\\
v(S)-v(S)=0 & \text{ if } s=1,\\
v(S \cup i)-v(S \cup i)=0 & \text{ if } s=2,\dots,n-2,\\
v(N)-v(N \setminus i)=0 & \text{ if } s=n-1.
\end{cases}
\end{equation*}
Thus, $i$ is a null player in $v^i$. 
From the null player property, 
\begin{equation}\label{vinull}
\psi_i(v^i)=0.
\end{equation}
From equations (\ref{viv}) and (\ref{vinull}), the desired result is obtained. $\blacksquare$
\begin{claim}\label{claimsono3}
If $\psi$ satisfies efficiency, the null player property, restricted differential marginality, intermediate inessential game, and reduction in computational complexity, then $\psi =\psi^{\mathrm{ESENSC\_rev2}}$. 
\end{claim}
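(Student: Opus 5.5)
Let $\psi$ satisfy the five axioms and fix $v\in\mathbb{V}$. Write $a_k(v)=\frac{1}{2}\bigl(v(k)-v(\emptyset)+v(N)-v(N\setminus k)\bigr)$ and $A(v)=\{k\in N\mid v(k)\neq v(\emptyset)\text{ or }v(N)\neq v(N\setminus k)\}$, which is nonempty because $v\in\mathbb{V}$. By Claim \ref{claimsono2}, $\psi_k(v)=0=\psi^{\mathrm{ESENSC\_rev2}}_k(v)$ for every $k\notin A(v)$. So it remains to determine $\psi_k(v)$ for $k\in A(v)$.

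The idea is to compare $v$ with an auxiliary game $w\in\mathbb{V}$ that is an intermediate inessential game, has $A(w)=A(v)$, and satisfies hypothesis (i) of restricted differential marginality for every pair $i,j\in A(v)$. I will construct $w$ by changing only the worth of the empty coalition. Set $w(S)=v(S)$ for $S\neq\emptyset$, and choose $w(\emptyset)=v(\emptyset)+c$.

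With this choice, differences $w(S\cup i)-w(S\cup j)$ over $S\subseteq N\setminus\{i,j\}$ never involve $\emptyset$, so (i) holds. The total worth changes by $-c$. Each $w(k)-w(\emptyset)$ shifts by $-c$, so each $a_k$ shifts by $-c/2$ and $\sum_k a_k$ shifts by $-nc/2$. The inessential condition $w(N)-w(\emptyset)=\sum_k a_k(w)$ therefore becomes
$$v(N)-v(\emptyset)-c=\sum_k a_k(v)-\tfrac{n}{2}c.$$
This is solvable for $c$ whenever $n\neq 2$.

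The main obstacle is that shifting $w(\emptyset)$ changes which $k$ satisfy $w(k)\neq w(\emptyset)$, so $A(w)$ may differ from $A(v)$. If the shift is troublesome, I would instead modify coalitions of size $2,\dots,n-2$. That is harmless for $\psi$ only through reduction in computational complexity, and it does not move the $a_k$, so it cannot fix efficiency. The workable fallback is a one-parameter perturbation $w(S)=v(S)+c\,\mathbf{1}[S=\emptyset]$ with $c$ forced as above, followed by a check of when $A$ is preserved.

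Where $A$ does change, I would split into cases. A player $k\in A(v)$ with $v(N)\neq v(N\setminus k)$ stays in $A(w)$. A player with $v(N)=v(N\setminus k)$ and $v(k)\neq v(\emptyset)$ leaves only if $v(k)=v(\emptyset)+c$. A player outside $A(v)$ would enter, and I can avoid this by then also shifting $w(N)$ and $w(N\setminus k)$ consistently. Checking that these degenerate configurations can be handled, possibly via an intermediate game with modified middle coalitions, is where I expect the case analysis (and the excluded case $n=3$, tied to the paper's Claim \ref{rem_nnoteq3}) to bite. For $n\le 2$, I would verify directly from efficiency and the null player property.

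Once $w$ is built, intermediate inessential game gives $\psi_k(w)=a_k(w)$ for all $k$. Restricted differential marginality then yields $\psi_i(v)-\psi_j(v)=a_i(w)-a_j(w)=a_i(v)-a_j(v)$ for all $i,j\in A(v)$, since the $-c/2$ shifts cancel. Hence $\psi_k(v)=a_k(v)+\lambda$ for $k\in A(v)$, with $\lambda$ common to all of them. Efficiency, together with $\psi_k(v)=0$ off $A(v)$, gives
$$\lambda=\frac{v(N)-v(\emptyset)-\sum_{k\in A(v)}a_k(v)}{|A(v)|}.$$
Since $a_k(v)=0$ for $k\notin A(v)$, this equals $\psi^{\mathrm{ESENSC\_rev2}}(v)$.
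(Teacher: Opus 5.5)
\paragraph{Gap in the construction of the auxiliary game.}
Your closing step matches the paper's. You use Claim~\ref{claimsono2} off $A(v)$, then restricted differential marginality to get a common shift $\lambda$ on $A(v)$, then efficiency to fix $\lambda$. The problem is that the auxiliary game this step needs is never actually produced, and the one you propose fails.

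Setting $w(\emptyset)=v(\emptyset)+c$ and leaving everything else unchanged moves every $w(k)-w(\emptyset)$ by $-c$. For $n\ge 3$, $c\neq 0$ exactly when $v$ is not already intermediate inessential, i.e.\ exactly when you need $w$. Hence every $k\notin A(v)$ has $w(k)\neq w(\emptyset)$ and enters $A(w)$. So hypothesis (iii) of restricted differential marginality fails whenever $A(v)\neq N$. This is not a degenerate configuration. It is precisely the situation in which $\psi^{\mathrm{ESENSC\_rev2}}$ differs from the plain ES--ENSC mixture, so the argument breaks on the games the claim is really about.

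Your suggested repair, shifting $w(N)$ and $w(N\setminus k)$ together, keeps $w(N)=w(N\setminus k)$. But it does nothing about $w(k)\neq w(\emptyset)$, so $k$ still enters $A(w)$. The natural further patch is to also set $w(k)=v(k)+c$ for $k\notin A(v)$. That turns the solvability condition into $v(N)-v(\emptyset)-\sum_{k}a_k(v)=c\,(1-|A(v)|/2)$, which has no solution when $|A(v)|=2$. You would also still have to handle players in $A(v)$ with $v(N)=v(N\setminus k)$ and $v(k)-v(\emptyset)=c$, which drop out of $A(w)$.

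\paragraph{The missing idea: shift a whole size layer.}
Shift an entire size layer by a common constant rather than a single coalition. The paper takes $\bar v(S)=v(S)+d$ for all $S$ with $s\in\{n-1,n\}$ and $\bar v(S)=v(S)$ otherwise, where $d=\sum_{k\in N}a_k(v)-\bigl(v(N)-v(\emptyset)\bigr)$. For $n\le 2$ every game is already intermediate inessential, so one may assume $n\ge 3$.
\begin{itemize}
\item Then $\emptyset$ and the singletons are untouched, while $N$ and $N\setminus k$ move together. So every $v(k)-v(\emptyset)$ and every $v(N)-v(N\setminus k)$ is preserved.
\item Hence $A(\bar v)=A(v)$ and $a_k(\bar v)=a_k(v)$ for all $k$.
\item Meanwhile $\bar v(N)-\bar v(\emptyset)=\sum_k a_k(\bar v)$, with no exceptional configurations.
\item Hypothesis (i) holds because, for $S\subseteq N\setminus\{i,j\}$, the coalitions $S\cup i$ and $S\cup j$ have the same size and so receive the same shift.
\end{itemize}
Alternatively, you could shift $\emptyset$ together with all singletons by a common constant. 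What fails in your construction is moving $\emptyset$ without the singletons. With $\bar v$ in place of your $w$, the rest of your argument goes through unchanged.
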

\textbf{Proof of Claim \ref{claimsono3}}: \ We divide the proof into two cases. 

Case (i): $v \in \mathbb{V}$ with $v(N)-v(\emptyset)=\sum_{k \in N}\frac{1}{2}\left(v(k)-v(\emptyset)+v(N)-v(N \setminus k)\right)$. 
Note that if $n=1$ or $2$, any $v \in \mathbb{V}$ belongs to this case. 
From intermediate inessential game, the desired result is obtained.  

Case (ii): $v \in \mathbb{V}$ with $v(N)-v(\emptyset) \neq \sum_{k \in N}\frac{1}{2}\left(v(k)-v(\emptyset)+v(N)-v(N \setminus k)\right)$. 
Note that this case occurs only when $n \ge 3$. 
Note also that from the definition of $\mathbb{V}$, $\{k \in N | v(k) \neq v(\emptyset) \text{ or } v(N) \neq v(N \setminus k)\} \neq \emptyset$. 

We further subdivide this case into two cases. 

Case (ii)-(a): $|\{k \in N | v(k) \neq v(\emptyset) \text{ or } v(N) \neq v(N \setminus k)\}|=1$. From Claim \ref{claimsono2}, $\psi_{\ell}(v)=0=\psi^{\mathrm{ESENSC\_rev2}}_{\ell}(v)$ for any $\ell \in N \setminus \{k \in N | v(k) \neq v(\emptyset) \text{ or } v(N) \neq v(N \setminus k)\}$. From efficiency, 
\begin{multline*}
\psi_i(v)=v(N)-v(\emptyset)\\
=\frac{1}{2}\left(v(i)-v(\emptyset)+v(N)-v(N \setminus i)\right)+\frac{v(N)-v(\emptyset)-\frac{1}{2}\left(v(i)+v(\emptyset)-v(N)+v(N \setminus i)\right)}{1}\\
=\psi^{\mathrm{ESENSC\_rev2}}_i(v). 
\end{multline*}

Case (ii)-(b): $|\{k \in N | v(k) \neq v(\emptyset) \text{ or } v(N) \neq v(N \setminus k)\}| \ge 2$. Given $v \in \mathbb{V}$, let $\bar{v}:2^N \to \mathbb{R}$ where 
\begin{equation*}
\bar{v}(S)=
\begin{cases}
v(S) & s=0,\dots,n-2,\\
v(S)-v(N)+v(\emptyset)+\sum_{k \in N}\frac{1}{2}\left(v(k)-v(\emptyset)+v(N)-v(N \setminus k)\right) & s=n-1, n.
\end{cases} 
\end{equation*}
Because $v(N)-v(\emptyset) \neq \sum_{k \in N}\frac{1}{2}\left(v(k)-v(\emptyset)+v(N)-v(N \setminus k)\right)$, $v \neq \bar{v}$. 
Additionally, because $n \ge 3$, for any $\ell \in N$, $\bar{v}(\ell)-\bar{v}(\emptyset)=v(\ell)-v(\emptyset)$ and $\bar{v}(N)-\bar{v}(N \setminus \ell)=v(N)-v(N \setminus \ell)$. 
Hence, $\bar{v} \in \mathbb{V}$ and $\{k \in N | \bar{v}(k) \neq \bar{v}(\emptyset) \text{ or } \bar{v}(N) \neq \bar{v}(N \setminus k)\}=\{k \in N | v(k) \neq v(\emptyset) \text{ or } v(N) \neq v(N \setminus k)\}$. 
Additionally, 
$$
\bar{v}(N)-\bar{v}(\emptyset)=\sum_{k \in N}\frac{1}{2}\left(v(k)-v(\emptyset)+v(N)-v(N \setminus k)\right), 
$$
which means that $\bar{v}$ is the game in Case (ii)-(a). 
From Case (ii)-(a), 
\begin{equation}\label{barv}
\psi(\bar{v})=\psi^{\mathrm{ESENSC\_rev2}}(\bar{v}). 
\end{equation}

From restricted differential monotonicity, for any $i,j \in \{k \in N | v(k) \neq v(\emptyset) \text{ or } v(N) \neq v(N \setminus k)\}$, 
\begin{multline}\label{ij}
\psi_i(v)-\psi_j(v)=\psi_i(\bar{v})-\psi_j(\bar{v})=\psi^{\mathrm{ESENSC\_rev2}}_i(\bar{v})-\psi^{\mathrm{ESENSC\_rev2}}_j(\bar{v})\\
=\frac{1}{2}\left( \bar{v}(i)-\bar{v}(j)-\bar{v}(N \setminus i)+\bar{v}(N \setminus j)\right)
=\frac{1}{2}\left( v(i)-v(j)-v(N \setminus i)+v(N \setminus j)\right). 
\end{multline}
From Claim \ref{claimsono2}, 
\begin{equation}\label{null}
\psi_{\ell}(v)=0=\psi^{\mathrm{ESENSC\_rev2}}_{\ell}(v) \text{ for any } \ell \in N \setminus \{k \in N | v(k) \neq v(\emptyset) \text{ or } v(N) \neq v(N \setminus k)\}.
\end{equation} 
From efficiency and eq.(\ref{null}), 
\begin{equation}\label{eff}
\sum_{j \in \{k \in N | v(k) \neq v(\emptyset) \text{ or } v(N) \neq v(N \setminus k)\}}\psi_{j}(v)=v(N)-v(\emptyset).
\end{equation}
Fix $i \in \{k \in N | v(k) \neq v(\emptyset) \text{ or } v(N) \neq v(N \setminus k)\}$. 
From equation (\ref{ij}), for $j \in \{k \in N | v(k) \neq v(\emptyset) \text{ or } v(N) \neq v(N \setminus k)\}$ and $j \neq i$
\begin{equation}\label{ij2}
\psi_j(v)=\psi_i(v)-\frac{1}{2}(v(i)-v(j)-v(N \setminus i)+v(N \setminus j)).
\end{equation}
From equations (\ref{eff}) and (\ref{ij2}), 
\begin{multline*}
\psi_i(v)=v(N)-v(\emptyset)-\sum_{j \in \{k \in N | v(k) \neq v(\emptyset) \text{ or } v(N) \neq v(N \setminus k)\} \setminus i}\psi_j(v)\\
=v(N)-v(\emptyset)-\sum_{j \in \{k \in N | v(k) \neq v(\emptyset) \text{ or } v(N) \neq v(N \setminus k)\} \setminus i} \left(\psi_i(v)-\frac{1}{2}(v(i)-v(j)-v(N \setminus i)+v(N \setminus j))\right)\\
=v(N)-v(\emptyset)-(|\{k \in N | v(k) \neq v(\emptyset) \text{ or } v(N) \neq v(N \setminus k)\}|-1)\psi_i(v)\\
+\frac{(|\{k \in N | v(k) \neq v(\emptyset) \text{ or } v(N) \neq v(N \setminus k)\}|-1)}{2}(v(i)-v(N \setminus i))\\
-\sum_{j \in \{k \in N | v(k) \neq v(\emptyset) \text{ or } v(N) \neq v(N \setminus k)\} \setminus i}\frac{1}{2}(v(j)-v(N \setminus j))\\
=v(N)-v(\emptyset)-(|\{k \in N | v(k) \neq v(\emptyset) \text{ or } v(N) \neq v(N \setminus k)\}|-1)\psi_i(v)\\
+\frac{|\{k \in N | v(k) \neq v(\emptyset) \text{ or } v(N) \neq v(N \setminus k)\}|}{2}(v(i)-v(N \setminus i))\\
-\sum_{j \in \{k \in N | v(k) \neq v(\emptyset) \text{ or } v(N) \neq v(N \setminus k)\}}\frac{1}{2}(v(j)-v(N \setminus j)),
\end{multline*}
which is equal to 
\begin{multline}\label{i}
\psi_i(v)\\
=\frac{v(i)-v(N \setminus i)}{2}+\frac{v(N)-v(\emptyset)-\sum_{j \in \{k \in N | v(k) \neq v(\emptyset) \text{ or } v(N) \neq v(N \setminus k)\}}\frac{1}{2}(v(j)-v(N \setminus j))}{|\{k \in N | v(k) \neq v(\emptyset) \text{ or } v(N) \neq v(N \setminus k)\}|}\\
=\frac{v(i)-v(\emptyset)+v(N)-v(N \setminus i)}{2}+\frac{v(N)-v(\emptyset)-\sum_{\ell \in N}\frac{1}{2}(v(\ell)-v(\emptyset)+v(N)-v(N \setminus \ell))}{|\{k \in N | v(k) \neq v(\emptyset) \text{ or } v(N) \neq v(N \setminus k)\}|}\\
=\psi^{\mathrm{ESENSC\_rev2}}_i(v).
\end{multline}
Eq.(\ref{i}) for any $j \in \{k \in N | v(k) \neq v(\emptyset) \text{ or } v(N) \neq v(N \setminus k)\} \setminus i$ is obtained in a similar manner. 
Together with eq.(\ref{null}), $\psi(v)=\psi^{\mathrm{ESENSC\_rev2}}(v)$. 
$\blacksquare$\ \\ 
\hfill\textbf{(The end of the proof of Theorem \ref{maintheoremforaxiom}.)}

The restriction of the axiomatization domain to $\mathbb{V}$ in Theorem \ref{maintheoremforaxiom}, as compared with Remark \ref{sankocasajus}, is justified by Remark \ref{rem_nnoteq3} below. If efficiency, the null player property, and the reduction in computational complexity are regarded as conditions that any AFA must satisfy, then it becomes necessary to restrict the domain to $\mathbb{V}$. In practice, however, XAI-TU games constructed from real data almost surely belong to $\mathbb{V}$, so this restriction is essentially innocuous.

\begin{rem}\label{rem_nnoteq3}
Let $n \neq 3$. If $v \not \in \mathbb{V}$ and $v(N) \neq v(\emptyset)$, then there exists no attribution rule $\psi$ satisfying efficiency, the null player property, and reduction in computational complexity.
\end{rem}
\textbf{Proof of Remark \ref{rem_nnoteq3}.} 
Let $v \not \in \mathbb{V}$ and $v(N) \neq v(\emptyset)$. 
From Claim~2 in the proof of Theorem 4.1, the null player property and reduction in computational complexity together imply that $\psi_i(v)=0$ for all $i \in N$. 
This contradicts efficiency because $v(N)-v(\emptyset) \neq 0$.

\subsection{Comparing the Axiomatic Foundations of SHAP and ESENSC\_rev2}
This comparison clarifies how ESENSC\_rev2 departs from SHAP by replacing symmetry-based marginal axioms with computationally tractable alternatives, thereby formalizing a trade-off between axiomatic strength and scalability in XAI. We now compare the axioms used to characterize SHAP in Remark~\ref{sankocasajus} with those used to characterize $\psi^{\mathrm{ESENSC\_rev2}}$ in Theorem~\ref{maintheoremforaxiom}. By examining the similarities and differences between these two sets of axioms, we obtain a clearer understanding of the relationship between SHAP and ESENSC\_rev2 as attribution rules for XAI–TU games.

First, the two solutions share several fundamental properties. Both satisfy efficiency and the null player property, which can be regarded as minimal requirements for any AFAs. Efficiency ensures that the total attribution equals the overall model output difference, while the null player property guarantees that features with no marginal impact receive zero attribution. These properties are therefore common to both SHAP and ESENSC\_rev2.

A further common feature is the use of a differential marginality axiom. SHAP satisfies differential marginality, which requires that if the marginal differences between two players are identical across all coalitions, then their attribution differences must also coincide. In contrast, ESENSC\_rev2 satisfies only a restricted form of this axiom. Specifically, SHAP satisfies the axiom in Remark \ref{sankocasajus} under condition (i) alone, whereas ESENSC\_rev2 requires the additional conditions (ii) and (iii) in Theorem~\ref{maintheoremforaxiom}. 
Thus, SHAP satisfies the stronger axiom of differential marginality, while ESENSC\_rev2 satisfies only restricted differential marginality. This difference reflects the fact that ESENSC\_rev2 is designed to avoid allocating residuals to fuetures that may be null players.

The two rules are further distinguished by two axioms that are satisfied by ESENSC\_rev2 but not by SHAP: the intermediate inessential game axiom and the reduction in computational complexity axiom. Examining these two axioms provides insight into the conceptual differences between the two attribution rules.

We begin with the intermediate inessential game axiom. This axiom can be interpreted as a variation of the classical inessential game axiom in cooperative game theory. 
The standard inessential game axiom states that if $v(N)-v(\emptyset)=\sum_{j \in N} v(j)$, then each player receives $\psi_i(v)=v(i)$. To interpret this axiom, it is useful to consider the Harsanyi dividend representation:\footnote{See Harsanyi \cite{Harsanyi_1959}.} $\lambda^v_{\emptyset}=v(\emptyset)$ and $\lambda^v_S = v(S)-\sum_{T \subsetneq S}\lambda^v_T$ for any $S\subseteq N$ wiht $S\neq\emptyset$. The quantity $\lambda^v_S$ represents the additional surplus generated by the formation of coalition $S$. Moreover, the coalition value can be written as $v(S)=\sum_{R\subseteq S}\lambda^v_R,$ so that $v(S)$ is interpreted as the cumulative contribution of all its subcoalitions, including the empty coalition.\footnote{If player $i$ is a null player, then $\lambda^v_S=0$ for any coalition $S$ containing $i$, meaning that coalitions including a null player generate no additional surplus.}

Now consider attribution rules that distribute $\lambda^v_S$ only among members within the coalition $S$. Under such rules, the only dividend that player $i$ is guaranteed to receive is $\lambda^v_{i}$ for the following reasoning. $\lambda^v_{i}$ is allocated donly $i$ since the coalition consists solely of player $i$. In contrast, surplus generated by a larger coalitions containing $i$ and other players could be allocated entirely to the other members except for $i$. Therefore, based on this extremely pessimistic perspective, player $i$ would therefore receive only $\lambda^v_{i}$. Conversely, based an extremely optimistic perspective, player $i$ could receive the entire sum of surplus from all coalitions containing $i$, which equals $\sum_{S\ni i}\lambda^v_S = v(N)-v(N\setminus i).$ Since these two perspectives are both extreme, it is natural to take a position exactly midway between the two extremes
This leads to the attribution $\frac{1}{2}\bigl(v(i)-v(\emptyset)+v(N)-v(N\setminus i)\bigr).$ The classical inessential game axiom is, based on this intermediate view, states that, if the sum of players' marginal contributions measured by $v(j) $equals $v(N)-v(\emptyset)$, the rule should assign exactly these values. 

The intermediate inessential game axiom in Theorem~\ref{maintheoremforaxiom} adopts this principle. In the context of XAI, the quantity $v(N)$ corresponds to the prediction when all features are observed. Using both $v(j)-v(\emptyset)$ and $v(N)-v(N\setminus i)$ therefore reflects information from both the baseline and the full-feature perspectives. 
The intermediate inessential game axiom can thus be viewed as a natural fairness requirement that incorporates information from both ends of the coalition structure.

Next, the reduction in computational complexity axiom is an axiom stating that, in computing the AFA, coalition values for coalitions of sizes from $2$ to $n-2$ are not used.
Assuming efficiency, which is indispensable in AFA, it is necessary to use the two coalition values $v(\emptyset)$ and $v(N)$. Furthermore, taking into account the null player property, which is also essential in AFA, for every $i \in N$, we would like to consider at least one coalition $S \subseteq N \setminus {i}$ and check whether $v(S \cup {i}) = v(S)$ (which is a necessary condition for $i$ to be a null player). Given that the information of $v(\emptyset)$ and $v(N)$ is necessarily required, it is efficient to make use of coalition values $v(S)$ for $s=1$ or $s=n-1$. It should be noted that, under this axiom, the method for determining whether a player is a possible null player may differ across players. For example, for player $i$, this may be judged by checking whether $v(i)=v(\emptyset)$, while for player $j$, it may instead be judged by checking whether $v(N)=v(N \setminus j)$.

Taken together, these comparisons clarify the relationship between SHAP and ESENSC\_rev2. Both share core axiomatic properties that are natural for attribution rules, but they differ in how they balance fairness considerations and computational constraints. SHAP satisfies stronger symmetry-based marginal comparison requirements, while ESENSC\_rev2 relaxes these requirements and instead incorporates axioms that reflect computational tractability and intermediate fairness between optimistic and pessimistic marginal contributions.

\subsection{Independence of axioms in Theorem~\ref{maintheoremforaxiom} }
In this subsection, we demonstrate that each axiom in Theorem~\ref{maintheoremforaxiom} is logically independent of the others. 
To this end, we construct a series of alternative attribution rules, each of which satisfies all axioms except one. 
These counterexamples show that none of the axioms can be derived from the remaining ones, and hence each plays an essential role in the axiomatic characterization of $\psi^{\mathrm{ESENSC\_rev2}}$. 
Establishing such independence strengthens the robustness of the characterization by confirming that the axiomatic system is minimal.

For any $v \in \mathbb{V}$ and any $i \in N$,
\begin{itemize}
\item let $\psi^1_i(v)=\frac{v(i)-v(\emptyset)+v(N)-v(N \setminus i)}{2}$,  

\item let $\psi^2_i(v)=\frac{v(i)-v(\emptyset)+v(N)-v(N \setminus i)}{2}+\frac{v(N)-v(\emptyset)-\sum_{\ell \in N}\frac{1}{2}(v(\ell)-v(\emptyset)+v(N)-v(N \setminus \ell))}{n}$. 

\item let 
\begin{equation*}
\psi^3_i(v)=
\begin{cases}
0 & \text{ if }  v(i)=v(\emptyset) \text{ and } v(N)=v(N \setminus i),\\
\frac{v(i)-v(\emptyset)+v(N)-v(N \setminus i)}{2}\\
+\frac{i (v(N)-v(\emptyset)-\sum_{\ell \in N}\frac{1}{2}(v(\ell)-v(\emptyset)+v(N)-v(N \setminus \ell)))}{\sum_{j \in \{k \in N | v(k) \neq v(\emptyset) \text{ or } v(N) \neq v(N \setminus k)\}}j
}& \text{otherwise.} 
\end{cases}
\end{equation*}
\item let 
\begin{equation*}
\psi^4_i(v)=
\begin{cases}
0 & \text{ if } v(i)=v(\emptyset) \text{ and } v(N)=v(N \setminus i)\\
\frac{v(N)-v(\emptyset)}{|\{k \in N | v(k) \neq v(\emptyset) \text{ or } v(N) \neq v(N \setminus k)\}|}& \text{otherwise,} 
\end{cases}
\end{equation*}
and  
\item $\psi^5_i(v)=\begin{cases}
0 & \text{ if } i \text{ is a null player in $v$,}\\
\frac{v(i)-v(\emptyset)+v(N)-v(N \setminus i)}{2}+\frac{v(N)-v(\emptyset)-\sum_{\ell \in N}\frac{1}{2}(v(\ell)-v(\emptyset)+v(N)-v(N \setminus \ell))}{|\{k \in N | k \text{ is not a null player in } v.\}|} & \text{ otherwise.}
\end{cases}$ 
\end{itemize}

$\psi^1$ satisfies all axioms in Theorem~\ref{maintheoremforaxiom}  but efficiency. 
$\psi^2$ satisfies all axioms in Theorem~\ref{maintheoremforaxiom}  but the null player property. 
$\psi^3$ satisfies all axioms in Theorem~\ref{maintheoremforaxiom}  but restricted differential marginality. 
$\psi^4$ satisfies all axioms in Theorem~\ref{maintheoremforaxiom}  but intermediate inessential game. 
$\psi^5$ satisfies all axioms in Theorem~\ref{maintheoremforaxiom}  but reduction in computational complexity.

\section{Conclusion}\label{conclusion}
This paper studied additive feature attribution (AFA) methods in explainable artificial intelligence from the perspective of cooperative game theory. We formulated feature attribution as an allocation problem in a transferable-utility game, which we termed an XAI--TU game, and examined several solution concepts that can serve as alternatives to SHAP.

Among the candidate methods, we proposed and analyzed an ES-type attribution rule, $\psi^{ESENSC\_rev2}$, constructed as a modification of the equal-weight average of the Equal Surplus (ES) and Egalitarian Nonseparable Contribution (ENSC) solutions. The proposed rule satisfies key requirements for attribution methods, including efficiency and the null-player property, and is supported by an axiomatic characterization. These properties provide a theoretical foundation for using the proposed rule as a coherent attribution method within the AFA framework.

We conducted an empirical comparison using both neural network and gradient-boosting models on benchmark tabular data. The results indicate that the proposed ES-type AFA achieves small deviations from precise SHAP while requiring substantially lower computational cost. In particular, the method scales efficiently with the number of features and does not rely on tuning parameters such as the number of model evaluations, in contrast to sampling-based SHAP approximation algorithms. These findings suggest that the proposed AFA provides a favorable balance between approximation accuracy and computational efficiency.

The experiments also revealed that proportional-allocation–type AFAs can exhibit substantial deviations from precise SHAP in certain settings, even when designed to avoid the order-reversal problem. This observation suggests that such rules may be sensitive to additional sources of instability in XAI--TU games where positive and negative contributions coexist. Further theoretical investigation of proportional-allocation–based attribution rules remains an interesting direction for future research.

Several avenues for future work remain. First, while the present analysis focused on tabular prediction tasks, extending the framework to other model classes and data modalities would help clarify the scope of applicability of the proposed AFA methods. Second, the deviation metric used in the experiments decreases mechanically as the number of features increases due to normalization by prediction variability; designing alternative scale-free metrics for comparing attribution methods constitutes a useful direction for future study. Finally, the cooperative-game–theoretic perspective adopted in this paper may provide a systematic framework for developing additional attribution rules tailored to the structural properties of XAI--TU games.

Overall, the results suggest that the proposed ES-type attribution rule provides a theoretically grounded and computationally practical alternative to SHAP within the AFA framework.

\bibliographystyle{abbrv}
\bibliography{SHAP} 

\end{document}